\newcommand{\iloco}{iLOCO\xspace}
\newcommand{\loco}{LOCO\xspace}
\newcommand{\test}{\mathrm{test}}
\newcommand{\spt}{\mathrm{split}}
\newcommand{\MP}{\mathrm{MP}}
\newcommand{\bX}{\boldsymbol{X}}
\newcommand{\bY}{\boldsymbol{Y}}
\newcommand{\err}{\mathrm{Error}}
\newcommand{\bbE}{\mathbb{E}}
\newcommand{\bbR}{\mathbb{R}}
\newcommand{\bbP}{\mathbb{P}}
\newcommand{\ind}{\mathbb{I}}
\newtheorem{assump}{Assumption}
\newtheorem{theorem}{Theorem}
\newtheorem{prop}{Proposition}
\newtheorem{definition}{Definition}
\newtheorem{lemma}{Lemma}
\newtheorem{claim}{Claim}
\title{iLOCO: Distribution-Free Inference for Feature Interactions}
\author[1]{Camille Olivia Little}
\author[2]{Lili Zheng}
\author[3]{Genevera I. Allen}
\affil[1]{Department of Electrical and Computer Engineering, Rice University}
\affil[2]{Department of Statistics, University of Illinois Urbana-Champaign}
\affil[3]{Department of Statistics, Columbia University}
\begin{document}

\maketitle

\begin{abstract}
Feature importance measures are widely studied and are essential for understanding model behavior, guiding feature selection, and enhancing interpretability. However, many machine learning fitted models involve complex interactions between features.  Existing feature importance metrics fail to capture these pairwise or higher-order effects, while existing interaction metrics often suffer from limited applicability or excessive computation; no methods exist to conduct statistical inference for feature interactions.  To bridge this gap, we first propose a new model-agnostic metric, interaction Leave-One-Covariate-Out (\iloco), for measuring the importance of pairwise feature interactions, with extensions to higher-order interactions. Next, we leverage recent advances in \loco inference to develop distribution-free and assumption-light confidence intervals for our \iloco metric. To address computational challenges, we also introduce an ensemble learning method for calculating the iLOCO metric and confidence intervals that we show is both computationally and statistically efficient.  We validate our \iloco metric and our confidence intervals on both synthetic and real data sets, showing that our approach outperforms existing methods and provides the first inferential approach to detecting feature interactions.

Keywords: Interactions, feature importance, interpretability, minipatch ensembles, LOCO inference, distribution-free inference

\end{abstract}

\clearpage
\onehalfspace

\begin{refsection}

%%%%%%%%%%%%%%%%%
\section{Introduction}
\label{sec:intro}
Machine learning systems are increasingly deployed in critical applications, necessitating human-understandable insights. Consequently, interpretable machine learning has become a rapidly expanding field \citep{murdoch:2019, du:2019}. For predictive tasks, one of the most important aspects of interpretation is assessing feature importance. While much focus has been placed on quantifying the effects of individual features, the real power of machine learning lies in its ability to model higher-order interactions and complex feature representations \citep{gromping:2009}. However, there is a notable gap in interpretability: few methods provide insights into the higher-order feature interactions that truly drive model performance. 

Understanding how features interact to influence predictions is a critical component of interpretable machine learning, particularly in scientific applications or to make downstream business decisions. In drug discovery, for example, uncovering how chemical properties interact can guide the identification of effective compounds in large-scale screening processes \citep{sachdev:2019}. In material science, feature interactions reveal relationships that enable the development of innovative materials with desired properties \citep{apel:2013, liu2023feature}. In business, analyzing interactions between customer demographics and purchasing behaviors can inform personalized marketing strategies and enhance customer retention \citep{liu:2023}.
In genomics, interactions between genes or ``epistasis'', is believed to play a major role in driving complex diseases such as asthma, diabetes, multiple sclerosis, and various cardiovascular diseases \citep{cordell2002epistasis, phillips2008epistasis, mackay2014epistasis, li2020statistical, guindo2021impact, zeng2022cis, singhal2023gene, wang2023epistasis}. These application areas underscore the need for machine learning tools that can accurately detect and quantify such feature interactions, providing interpretable and actionable insights for potentially high-dimensional data.

In addition to detecting feature interactions, it is crucial to quantify the uncertainty associated with these interactions to ensure they are not simply noise. Uncertainty quantification has emerged as a focus in machine learning, addressing the need for more reliable and interpretable models \citep{lei:2018, tibshirani:2019, tibshirani:2021,romano:2019,kim:2020,chernozhukov:2021}. While much of this work has traditionally centered on quantifying uncertainty in predictions \citep{lei:2018, tibshirani:2021}, there has been a growing interest in extending these techniques to feature importance \citep{rinaldo:2019,zhang2020floodgate,zhang:2022,dai:2022,williamson:2023}. To the best of our knowledge, uncertainty quantification has not yet been developed for feature interactions.  

Motivated by these challenges, we introduce the interaction Leave-One-Covariate-Out (\iloco) metric and inference framework, which addresses key limitations of existing methods. Our framework provides a model-agnostic, distribution-free, statistically and computationally efficient solution for quantifying feature interactions and their uncertainties across diverse applications.

\subsection{Related Works}
While feature importance has been extensively studied \citep{samek:2021, altmann:2010, lipovetsky:2001, murdoch:2019}, work on feature interactions remains relatively limited. Shapley-based approaches, such as FaithSHAP, extend the game-theoretic framework to quantify pairwise interactions by decomposing predictions into main and interaction effects \citep{Tsai:2023, Sundararajan:2020, Rabitti:2019}. The H-statistic, based on partial dependence, measures the proportion of output variance attributable to interactions \citep{Friedman:2008}. Although both of these methods are model-agnostic methods and theoretically grounded, their computational cost increases exponentially with the number of features and observations, limiting their scalability. Recent work has also begun exploring interactions in large language models (LLMs), using attribution techniques to understand how combinations of input tokens influence prediction \citep{kang:2025}. Model-specific alternatives have been developed for certain model classes, such as Iterative Forests for random forests \citep{Basu:2018}, Integrated Hessians for neural networks \citep{janizek2021explaining}, and linear regression-based methods \citep{li2022high,cordell2009detecting, purcell2007plink}. %for detecting gene-gene interactions in genomics \citep{}. 
Several local interaction detection/attribution methods have also been proposed \citep{tsang2020does, tsang2018can}, focusing on explanations for individual instances instead of global interpretation. Further, no prior work provides uncertainty quantification for model-agnostic feature interaction. These limitations motivate the development of scalable metrics that can be applied across a wide range of models, associated with rigorous statistical inference procedures.

There is a growing body of research on uncertainty quantification for model-agnostic feature importance in machine learning \citep{lei:2018,zhang:2022,williamson:2023,williamson:2020, abdar:2021,grigoryan:2023,shah2020hardness,watson2021testing}; however, this work does not extend to feature interactions.  Our goal is to develop confidence intervals for our feature interaction metric that reflect its statistical significance and uncertainty. Our method builds upon the Leave-One-Covariate-Out (\loco) metric and inference framework originally proposed by \citep{lei:2018} and later studied by \citep{rinaldo:2019,gan2022inference}. Since computational cost is a major challenge for interaction methods, we improve efficiency by adopting a fast inference strategy based on minipatch \citep{gan2022inference,yao:2021,toghani:2021} ensembles, which simultaneously subsample both observations and features.

\subsection{Contributions}
Our work focuses on two main contributions. First, we introduce \iloco, a distribution-free, model-agnostic method for quantifying feature interactions that can be applied to any predictive model without relying on assumptions about the underlying data distribution. Moreover, we introduce an efficient way of estimating this metric via minipatch ensembles. Second, we develop rigorous distribution-free inference procedures for interaction effects, enabling confidence intervals for the interactions detected. Collectively, our contributions offer notable advancements in quantifying feature interaction importance and even higher-order interactions in interpretable machine learning.

\section{iLOCO}
\subsection{Review: \loco Metric}
To begin, let us first review the widely used \loco metric that is used  for quantifying individual feature importance in predictive models \citep{lei:2018}. For a feature \( j \), \loco measures its importance by evaluating the change in prediction error when the feature is excluded. Formally, given a prediction function \( f: \mathcal{X} \to \mathbb{R} \) and an error measure \( \text{Err}() \), the \loco importance of feature \( j \) is:

\begin{align}
\label{eq:loco}
\Delta_j (\mathbf{X}, \mathbf{Y}) = \mathbb{E} \left[ 
    \err(Y, f^{-j}(X^{-j}; \mathbf{X}^{-j}, \mathbf{Y})) - 
    \err(Y, f(X; \mathbf{X}, \mathbf{Y})) 
    \,\middle|\, \mathbf{X}, \mathbf{Y} 
\right]
\end{align}

where \( f^{-j} \) is the prediction function trained without feature \( j \). A large positive $\Delta_{j}$ value %, which must be non-negative by definition, 
suggests the importance of the feature by indicating performance degradation when the feature is excluded. While \loco is effective at measuring the importance of individual features, it fails to capture feature interactions. To address this challenge, we propose the interaction \loco (\iloco) metric that extends \loco to explicitly account for pairwise feature interactions.

\subsection{\iloco Metric}
Inspired by \loco, we seek to define a score that measures the influence of the interaction of two variables $j$ and $k$. Consider the expected difference in error between a further reduced model $f^{-(j, k)}$  and the full model: 
\begin{align}
\Delta_{j,k} (\mathbf{X}, \mathbf{Y})
= \mathbb{E} \left[
    \err(Y, f^{-(j,k)}(X^{-(j,k)}; \mathbf{X}^{-(j,k)}, \mathbf{Y})) - 
    \err(Y, f(X; \mathbf{X}, \mathbf{Y}))
    \,\middle|\, \mathbf{X}, \mathbf{Y}
\right]
\end{align}
The reduced model $f^{-(j, k)}$ removes covariates $j$ and $k$, their pairwise interaction, and any group involving $j$ or $k$. The quantity $\Delta_{j,k}$ captures the predictive power contributed by $j,k$ and all interactions that include them. This naturally leads to our \iloco metric:

\begin{definition}
For two features $j,k$, the  \iloco metric  is defined as:
\begin{align}
\mathrm{\iloco}_{j,k} = \Delta_j + \Delta_k - \Delta_{j,k}.
\label{eq:iloco}
\end{align}
\end{definition}

Notice this can also be written as
\begin{align}
\mathrm{\iloco}_{j,k} = \mathbb{E} \left[
    \err(Y, f^{-j}(X^{-j}; \mathbf{X}^{-j}, \mathbf{Y})) +
    \err(Y, f^{-k}(X^{-k}; \mathbf{X}^{-k}, \mathbf{Y})) \right. \notag \\
    \left. - \err(Y, f^{-(j,k)}(X^{-(j,k)}; \mathbf{X}^{-(j,k)}, \mathbf{Y})) -
    \err(Y, f(X; \mathbf{X}, \mathbf{Y}))
    \,\middle|\, \mathbf{X}, \mathbf{Y}
\right]
\end{align}
which highlights that \iloco compares the total effect of removing features $j$ and $k$ individually versus removing them jointly. The final subtraction of $\err(Y, f(X; \mathbf{X}, \mathbf{Y}))$ corrects for double-counting the full model’s baseline error, which appears in both $\Delta_j$ and $\Delta_k$. Without this correction, any overlapping contribution of $j$ and $k$ would be inflated. A large positive \iloco value suggests that $j$ and $k$ work together to improve predictive accuracy beyond their individual effects. 

To validate and theoretically examine what the \iloco score captures, we adopt a framework commonly used in the interpretable machine learning literature to examine the decomposition of feature contributions: a functional ANOVA decomposition \citep{hoeffding1948class,stone1994use,Hooker2004Additive,hooker:2007}.

\begin{assump}\label{assump:anova}
We assume that the conditional mean function $f^*(X) = \bbE[Y \mid X]$ admits a functional ANOVA decomposition of the form
\[
f^*(X) = g_0 + \sum_{j=1}^M g_j(X_j) + \sum_{j < k} g_{j,k}(X_j, X_k) + \sum_{j < k < l} g_{j,k,l}(X_j, X_k, X_l) + \cdots = \sum_{u\subseteq[M]}g_u(X_u),
\]
where $[M]$ is the index set of the feature space $\mathcal{X}$, and each function component above satisfies $\bbE[g_u(X_u)] = 0$ and $\bbE[g_u(X_u) g_v(X_v)] = 0$ whenever $u \neq v$.
\end{assump}
Note that much of the work on functional ANOVA assumes orthogonality of the functions and Assumption~\ref{assump:anova} can be viewed as a probabilistic extension of such conditions \cite{hooker:2007}. More discussion and justification for Assumption \ref{assump:anova} is included in the supplemental material. For illustrative purposes, suppose we replace the fitted models in the iLOCO metric by their population counterparts. The resulting theoretical version of the \iloco score is defined as $\mathrm{\iloco}^*_{j,k} = \Delta^*_j + \Delta^*_k - \Delta^*_{j,k}$, where each term quantifies the expected increase in prediction error when specific subsets of features are removed from the population model. For example, $\Delta^*_j = \bbE[\err(Y, f^{*-j}(X^{-j}))] - \bbE[\err(Y, f^*(X))]$, where $f^{*-j}$ removes all $g_u$ terms with $j \in u$ in the functional ANOVA decomposition. Analogous definitions hold for $\Delta^*_k$ and $\Delta^*_{j,k}$.
\begin{prop}
Suppose Assumption~\ref{assump:anova} holds. Then:
\begin{itemize}
\item[(a)] \textbf{Regression} If $Y = f^*(X) + \epsilon$ and $\err(Y,\hat{Y}) = (Y-\hat{Y})^2$, then $\mathrm{\iloco}^*_{j,k} = \sum_{u \subseteq [M]: j,k \in u} \bbE[g_u(X_u)^2].$
% \begin{align*}
% \iloco^*_{j,k} = \sum_{u \subset [M]: j,k \in u} \bbE[g_u(X_u)^2].
% \end{align*}
\item[(b)] \textbf{Classification} If $Y \sim \mathrm{Bernoulli}(f^*(X))$ and $\err(Y,\hat{Y}) = |Y-\hat{Y}|$, then $\mathrm{\iloco}^*_{j,k} = 2 \sum_{u \subseteq [M]: j,k \in u} \bbE[g_u(X_u)^2].$
% \begin{align*}
% \iloco^*_{j,k} = 2 \sum_{u \subset [M]: j,k \in u} \bbE[g_u(X_u)^2]
% \end{align*}
\end{itemize}
\end{prop}
This result provides an interpretation for our iLOCO metric, demonstrating that it captures the variance contribution arising specifically from the joint interaction terms of \( j \) and \( k \), including higher-order interactions. Full derivations and proofs are provided in the supplemental material.

\subsection{\iloco Estimation}\label{sec:est}

To estimate our \iloco metric in practice, we need new data samples to evaluate the expectation of differences in prediction error. To this end, we introduce two estimation strategies, \iloco via data-splitting and \iloco via minipatch ensembles; the latter is specifically tailored to address the major computational burden that arises from fitting models leaving out all possible interactions.  

\textbf{iLOCO via Data Splitting}\\
The data-splitting approach estimates \(\text{iLOCO}\) by partitioning the dataset \((\mathbf{X}, \mathbf{Y})\) into a training set \(D_1 = (\mathbf{X}^{(1)}, \mathbf{Y}^{(1)})\) and a test set \(D_2 = (\mathbf{X}^{(2)}, \mathbf{Y}^{(2)})\). We train the full model $\hat{f}$ along with the models excluding $j$, $k$, and $j,k$ on training dataset $D_1$. The error functions for the full model and the corresponding feature-excluded error functions are evaluated on the test set $D_2$. This approach effectively computes \iloco, but it comes with certain challenges. Since it involves training multiple models and only utilizes a subset of the dataset for each, there may be concerns about computational efficiency and the stability of the resulting metric.

\textbf{\iloco via Minipatches}\\
To address the computational limitations of data splitting, we introduce \iloco-MP (minipatches), an efficient estimation method that leverages ensembles of subsampled observations and features. Let 
$N$ denote the total number of observations and $M$ the number of features in the dataset. This method repeatedly constructs minipatches by randomly sampling \(n\) observations and \(m\) features from the full dataset \citep{yao:2020}. For each minipatch, a model is trained on the reduced dataset, and we can evaluate predictions on left-out observations.  Specifically, for features \(j\) and \(k\), the leave-one-out and leave-two-covariates-out prediction is defined as $\hat{f}^{-(j,k)}_{-i}(X_i) = 
\frac{1}{\sum_{b=1}^{B} \mathbb{I}(i \notin I_b) \mathbb{I}(j, k \notin F_b)} \sum_{b=1}^{B} \mathbb{I}(i \notin I_b) \mathbb{I}(j, k \notin F_b) \hat{f}_b(X_i),$ where \(I_b \subset [N]\) is the set of observations selected for the \(b\)-th minipatch, \(F_b \subset [M]\) is the set of features selected, \(\hat{f}_b\) is the model trained on the minipatch, and \(\mathbb{I}(\cdot)\) is an indicator function that checks whether an index is excluded. This formulation ensures that the predictions exclude both the observation \(i\) and the features \(j\) and \(k\). We can define the other leave-one-out predictions accordingly to obtain a computationally efficient and stable approximation of the \iloco metric. The full \iloco-MP algorithm is given in the supplement.  Note that the \iloco-MP metric can only be computed for models in the form of minipatch ensembles built using any base model.  Further, note that after fitting minipatch ensembles, computing the \iloco metric requires no further model fitting and is nearly free computationally.

\subsection{Extension to Higher-Order Interactions}
Detecting higher-order feature interactions is crucial in understanding complex relationships in data, as many real-world phenomena involve intricate dependencies among multiple features that cannot be captured by pairwise interactions alone. To extend the \iloco metric to account for such higher-order interactions, we generalize its definition to isolate the unique contributions of \(S\)-way interactions among features:
\begin{definition}
For an \(S\)-way interaction, the \iloco metric is defined as:
$$\mathrm{\iloco}_S = \sum_{T \subseteq S,T\neq\emptyset} (-1)^{|T|+1} \Delta_T.$$
\end{definition}
Here, we sum over all possible subsets of \(S\), and for each subset \(T\), \(\Delta_T\) denotes the contribution of \(T\) to the model error.  The term \((-1)^{|T|+1}\) alternates the sign based on the size of \(T\), ensuring proper aggregation and cancellation of irrelevant terms that appear multiple times. In the supplemental material, we have a generalized version of Proposition 1 that justifies the S-way interaction iLOCO score; in a nutshell, under Assumption \ref{assump:anova}, the $\mathrm{\iloco}_S$ sums over the squared norm of the function terms $g_u$ where $u\supseteq S$. By leveraging this generalized formulation, \iloco extends its ability to capture and quantify the contributions of higher-order interactions, providing a more complete understanding of complex feature relationships. 

\subsection{iLOCO for Correlated, Important Features}
Quantifying the importance of correlated features is a known challenge and unsolved problem in interpretable machine learning \citep{verdinelli:2024}. This problem is especially pronounced for \loco because when correlated features are removed individually, the remaining correlated feature(s) can partially compensate, resulting in a small \loco metric that can miss important but correlated features.  Interestingly, our proposed \iloco metric can help address this problem.  Note that for a strongly correlated and important feature pair $j$ and $k$, $\Delta_j$ and $\Delta_k$ (and \loco) will both be near zero as they compensate for each other.  But, $\Delta_{j,k}$ will have a strong positive effect, and thus our \iloco metric will be strongly negative for important, correlated feature pairs.  Thus, our \iloco metric can serve dual purposes: positive values indicate an important pairwise interaction whereas negative values indicate individually important but correlated feature pairs.  Thus, this alternative application of \iloco may also be a promising strategy for addressing this important correlated feature challenge in feature importance estimation and inference.

%%%%%%%%%%%%%%%%%%%%%%%%%%%%%%%%
\section{Distribution-Free Inference for iLOCO}

Beyond just detecting interactions via our \iloco metric, it is critical to quantify the uncertainty in these estimates to understand if they are statistically significant.  To address this, we develop distribution-free confidence intervals for both our \iloco-Split and \iloco-MP estimators that are valid under only mild assumptions.  

\subsection{iLOCO Inference via Data Splitting}
As previously outlined, we can estimate the iLOCO metric via data splitting, where the training set $D_1$ is used for training predictive models $\hat{f},\,\hat{f}^{-j},\,\hat{f}^{-k},\,\hat{f}^{-(j,k)}$, while the test set $D_2$ with $N^{\test}$ samples is used for evaluating the error functions for each trained model. Each test sample gives an unbiased estimate of the target iLOCO metric, $\mathrm{\iloco}^{\spt}_{j,k} = \Delta_j^{\spt}+\Delta_{k}^{\spt}-\Delta_{j,k}^{\spt}$, where $\Delta_{j,k}^{\spt} = \bbE[\err(Y,\hat{f}^{-(j,k)}(X)) - \err(Y,\hat{f}(X))|\bX^{(1)},\bY^{(1)}],$ and $\Delta_j^{\spt}$, $\Delta_{k}^{\spt}$ are defined similarly. Here, the expectation is taken over the unseen test data, conditioning on $D_1$, which is a slightly different inference target than defined in \eqref{eq:iloco} which conditions on the model trained on all available data.  To perform statistical inference for $\mathrm{\iloco}^{\spt}_{j,k}$, we follow the approach of \citep{lei:2018}, collecting its estimates on all test samples $\{\widehat{\mathrm{\iloco}}_{j,k}(X_i^{(2)},Y_i^{(2)})\}_{i=1}^{N^{\test}}$ and constructing a confidence interval $\hat{\mathbb{C}}^{\spt}_{j,k}$ based on a normal approximation. The detailed inference algorithm for iLOCO-Split is included in the appendix. To assure valid asymptotic coverage of $\hat{\mathbb{C}}^{\spt}_{j,k}$ for inference target $\mathrm{\iloco}^{\spt}_{j,k}$, we need the following assumption:
\begin{assump}\label{assump:split_thirdmoment}
    Assume a bounded third moment: $\bbE(\widehat{\mathrm{\iloco}}_{j,k}(X_i^{(2)},Y_i^{(2)})-\mathrm{\iloco}^{\spt}_{j,k})^3/\sigma_{j,k}^3\leq C$, where $\sigma_{j,k}^2=\mathrm{Var}(\widehat{\mathrm{\iloco}}_{j,k}(X_i^{(2)},Y_i^{(2)})|\bX^{(1)},\bY^{(1)})$.
\end{assump}

\begin{theorem}[Coverage of iLOCO-Split]
     Suppose Assumption~\ref{assump:split_thirdmoment} holds and $\{(X_i,Y_i)\}_{i=1}^N$ are i.i.d., then we have $\lim_{N^{\test}\rightarrow\infty}\bbP(\mathrm{\iloco}^{\spt}_{j,k}\in \mathbb{C}^{\spt}_{j,k})= 1-\alpha$.
     \label{thm:coverage_Split}
\end{theorem}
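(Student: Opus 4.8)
The plan is to run a conditional central limit theorem argument on the test set and then pass from conditional to unconditional coverage by bounded convergence. Conditionally on the training split $D_1=(\bX^{(1)},\bY^{(1)})$, the four fitted models $\hat f,\hat f^{-j},\hat f^{-k},\hat f^{-j,k}$ are fixed, so the per-test-sample scores $Z_i:=\widehat{\iloco}_{j,k}(X_i^{(2)},Y_i^{(2)})$, $i=1,\dots,N^{\test}$, are i.i.d.\ given $D_1$ with conditional mean $\bbE[Z_i\mid D_1]=\iloco^{\spt}_{j,k}$ — this is precisely the unbiasedness built into the definition of the split target, since each score is an error difference evaluated on a point independent of $D_1$ — and conditional variance $\sigma_{j,k}^2$. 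Writing $\bar Z=\frac{1}{N^{\test}}\sum_i Z_i$ for the point estimate and $\hat\sigma_{j,k}^2=\frac{1}{N^{\test}-1}\sum_i(Z_i-\bar Z)^2$ for the empirical variance, the interval is $\mathbb{C}^{\spt}_{j,k}=[\,\bar Z\pm z_{1-\alpha/2}\,\hat\sigma_{j,k}/\sqrt{N^{\test}}\,]$, with $z_{1-\alpha/2}$ the standard normal quantile and $\Phi$ its CDF.

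The core step is a conditional Berry--Esseen bound. Applying the Berry--Esseen theorem to the i.i.d.\ standardized scores $\{(Z_i-\iloco^{\spt}_{j,k})/\sigma_{j,k}\}$, whose third absolute moment is controlled by the constant $C$ of Assumption~\ref{assump:split_thirdmoment}, gives
\begin{align*}
\sup_{t\in\bbR}\Big|\,\bbP\!\Big(\frac{\sqrt{N^{\test}}\,(\bar Z-\iloco^{\spt}_{j,k})}{\sigma_{j,k}}\le t \ \Big|\ D_1\Big)-\Phi(t)\,\Big|\ \le\ \frac{C_0\,C}{\sqrt{N^{\test}}}
\end{align*}
for a universal constant $C_0$, so this discrepancy vanishes as $N^{\test}\to\infty$ for (a.e.)\ realization of $D_1$. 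To handle the studentization, note Assumption~\ref{assump:split_thirdmoment} forces a finite conditional second moment, so the conditional weak law of large numbers gives $\hat\sigma_{j,k}^2\to\sigma_{j,k}^2$ in conditional probability; combined with $\sigma_{j,k}^2\in(0,\infty)$ and Slutsky's theorem, $\sqrt{N^{\test}}\,(\bar Z-\iloco^{\spt}_{j,k})/\hat\sigma_{j,k}\Rightarrow\mathcal N(0,1)$ conditionally on $D_1$.

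Finally, $\{\iloco^{\spt}_{j,k}\in\mathbb{C}^{\spt}_{j,k}\}$ is exactly the event $\{|\sqrt{N^{\test}}(\bar Z-\iloco^{\spt}_{j,k})/\hat\sigma_{j,k}|\le z_{1-\alpha/2}\}$, so by the previous step its conditional probability given $D_1$ tends to $\bbP(|\mathcal N(0,1)|\le z_{1-\alpha/2})=1-\alpha$ for a.e.\ $D_1$. Since conditional coverage probabilities lie in $[0,1]$, the bounded convergence theorem lets me take expectations over $D_1$ and obtain $\lim_{N^{\test}\to\infty}\bbP(\iloco^{\spt}_{j,k}\in\mathbb{C}^{\spt}_{j,k})=1-\alpha$. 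I expect the main obstacle to be the careful bookkeeping of the conditional argument: one must ensure Assumption~\ref{assump:split_thirdmoment} delivers its moment bound with a single constant $C$ valid for (a.e.)\ $D_1$ so that the Berry--Esseen rate is uniform, and one must rule out conditional degeneracy $\sigma_{j,k}=0$ for the studentization to be well posed — this is where a mild non-degeneracy condition (equivalently, $\sigma_{j,k}$ bounded away from $0$) is implicitly used. The remaining pieces — unbiasedness of the scores, the law of large numbers for $\hat\sigma_{j,k}^2$, and the Slutsky and bounded-convergence steps — are routine.
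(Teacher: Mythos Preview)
Your proposal is correct and follows essentially the same approach as the paper: conditional on $D_1$ the test scores are i.i.d.\ with mean $\iloco^{\spt}_{j,k}$, a CLT follows from the third-moment bound, and variance consistency plus Slutsky finish the argument. The paper invokes Lyapunov's CLT and routes variance consistency through a uniform-integrability argument (borrowed from \cite{bayle:2020}), whereas you use Berry--Esseen and a direct WLLN---both valid, yours slightly more elementary in the i.i.d.\ setting---and you make the conditional-to-unconditional passage via bounded convergence explicit where the paper leaves it implicit.
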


\begin{figure*}
    \centering
    \includegraphics[width =0.9\textwidth]{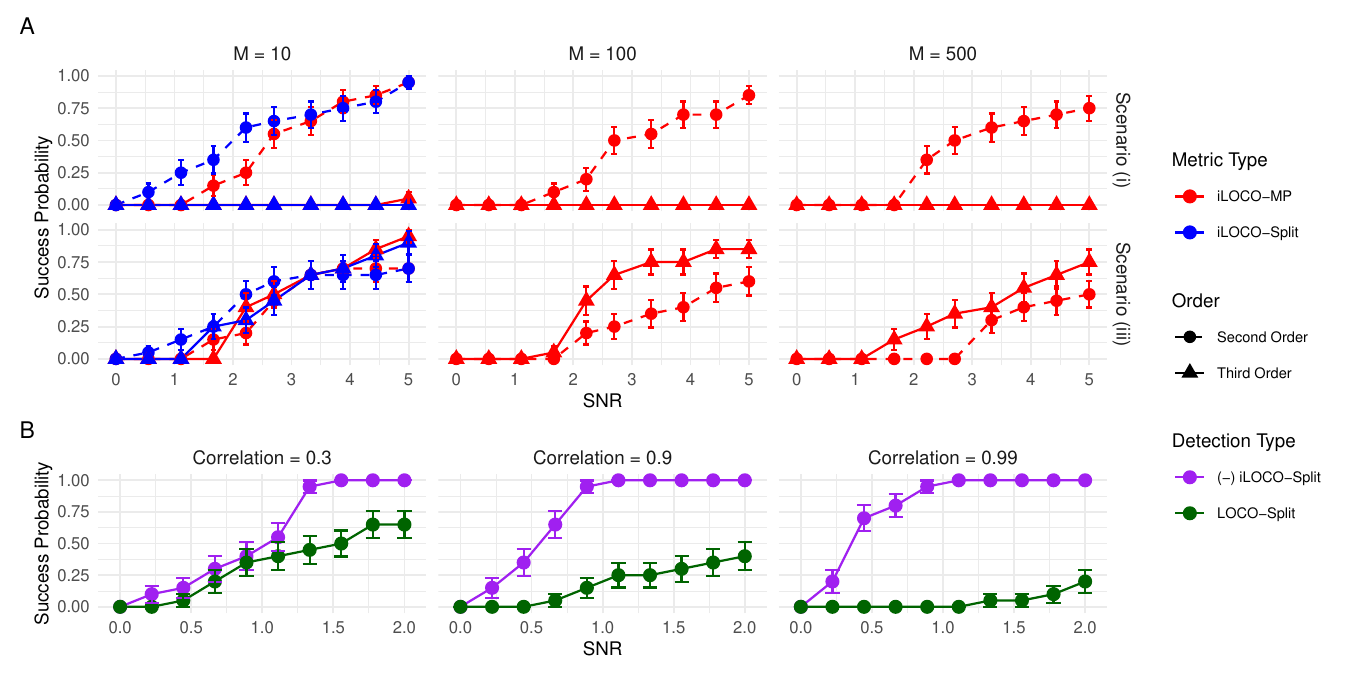}
    \caption{\textbf{Validation of iLOCO Metric.} Part A shows the success probability of identifying an interaction pair in nonlinear classification scenarios (i) and (iii) using an MLP classifier. Part B presents the success probability of detecting an important, correlated feature pair across varying correlation strengths.}
    \label{fig:metric_validation}
    \vspace{-.5cm}
\end{figure*}

\subsection{iLOCO Inference via Minipatches}
Recall that after fitting minipatch ensembles, estimating the \iloco metric is computationally free, as estimates aggregate over left out observations and/or features. Such advantages also carry over to statistical inference. In particular, we aim to perform inference for the target \iloco score defined in \eqref{eq:iloco} with predictive models (e.g., $f,\,f^{-j})$ all being minipatch ensembles. This inference target is denoted by $\mathrm{\iloco}_{j,k}^{\MP}$, and it is conditioned on all data instead of a random data split, $D_1$ as with \iloco-Split.  Then, for each sample $i$, we can compute the leave-one-observation-out predictors $\hat{f}_{-i}$, $\hat{f}_{-i}^{-j}$, $\hat{f}_{-i}^{-k}$, $\hat{f}_{-i}^{-(j,k)}$ simply by aggregating appropriate minipatch predictors, and evaluating these predictors on sample $i$ to obtain $\widehat{\mathrm{\iloco}}_{j,k}(X_i,Y_i)$. For statistical inference, we collect $\{\widehat{\mathrm{\iloco}}_{j,k}(X,Y)\}_{i=1}^N$ and construct a confidence interval $\mathbb{C}^{\MP}_{j,k}$ based on a normal approximation. The detailed inference procedure is given in the appendix.

Despite the fact that $\widehat{\mathrm{\iloco}}_{j,k}(X_i,Y_i)$ has a complex dependency structure since all the data is essentially used for both fitting and inference, we show asymptotically valid coverage of $\mathbb{C}^{\MP}_{j,k}$ under some mild assumptions.  First, let $h_{j,k}(X,Y)$ be the interaction importance score of feature pair $(j,k)$ evaluated at sample $(X,Y)$, with expectation taken over the training data $(\bX,\bY)$ that gives rise to the predictive models; 
let $(\sigma_{j,k}^{\MP})^2 = \mathrm{Var}(h_{j,k}(X_i,Y_i))$. 

\begin{assump}\label{assump:thirdmoment}  Assume a bounded third moment:
$\bbE[h_{j,k}(X,\,Y) - \bbE h_{j,k}(X,\,Y)]^3/(\sigma^{\MP}_{j,k})^3 \leq C$.

\end{assump}

\begin{assump}\label{assump:err_lipschitz}
    The error function is Lipschitz continuous w.r.t. the prediction: for any $Y\in \bbR$ and any predictions $\hat{Y}_1,\,\hat{Y}_2\in \bbR^d$,
    $|\err(Y,\hat{Y}_1)-\err(Y,\hat{Y}_2)|\leq L\|\hat{Y}_1-\hat{Y}_2\|_2$.
\end{assump}
Common error functions like mean absolute error   trivially satisfy this assumption.

\begin{assump}\label{assump:bnd_mu}
    The prediction difference between the predictors trained on different minipatches are bounded by $D$ at any input value $X$. 
\end{assump}
\begin{assump}\label{assump:mpsize}
    The minipatch sizes $(m,\,n)$ satisfy $\frac{m}{M},\,\frac{n}{N}\leq \gamma$ for some constant $0<\gamma<1$, and $n=o(\frac{\sigma_{j,k}^{\MP}}{LD}\sqrt{N})$.
\end{assump}
\begin{assump}\label{assump:mpnumber}
     The number of minipatches satisfies $B\gg(\frac{D^2L^2N}{(\sigma_{j,k}^{\MP})^2}+1)\log N$.
\end{assump}
These are mild assumptions on the minipatch size and number. Further, note that predictions between any pair of minipatches must simply be bounded, which is a much weaker condition that stability conditions typical in the distribution-free inference literature \citep{kim2023black}.  
\begin{theorem}[Coverage of iLOCO-MP]\label{thm:coverage_MP}
    Suppose Assumptions \ref{assump:thirdmoment}-\ref{assump:mpnumber} hold and $\{(X_i,Y_i)\}_{i=1}^N$ i.i.d., then we have $\lim_{N\rightarrow\infty}\bbP(\mathrm{\iloco}^{\MP}_{j,k}\in \mathbb{C}^{\MP}_{j,k})= 1-\alpha$. 
\end{theorem}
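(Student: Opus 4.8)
\textbf{Proof proposal for Theorem~\ref{thm:coverage_MP}.}
The plan is to reduce the coverage claim to a central limit theorem for an i.i.d.\ average, treating the minipatch sampling and the leave-one-out construction as an asymptotically negligible perturbation. Write $\bar V_N = \frac1N\sum_{i=1}^N \widehat{\iloco}_{j,k}(X_i,Y_i)$ for the point estimate that centers $\mathbb{C}^{\MP}_{j,k}$, and recall that the inference target can be written $\iloco^{\MP}_{j,k}=\bbE[h_{j,k}(X,Y)]$, with $h_{j,k}$ and $\sigma^{\MP}_{j,k}$ as defined just before the statement. The first step is the oracle decomposition
\[
\sqrt N\big(\bar V_N - \iloco^{\MP}_{j,k}\big) = \underbrace{\frac1{\sqrt N}\sum_{i=1}^N\big(h_{j,k}(X_i,Y_i)-\bbE h_{j,k}(X,Y)\big)}_{A_N} + \underbrace{\frac1{\sqrt N}\sum_{i=1}^N\big(\widehat{\iloco}_{j,k}(X_i,Y_i)-h_{j,k}(X_i,Y_i)\big)}_{R_N}.
\]
Because the $(X_i,Y_i)$ are i.i.d.\ and $h_{j,k}$ is a fixed function of a single sample, $A_N/\sigma^{\MP}_{j,k}$ is a standardized sum of i.i.d.\ mean-zero variables with bounded standardized third moment by Assumption~\ref{assump:thirdmoment}; the Berry--Esseen theorem gives $A_N/\sigma^{\MP}_{j,k}\Rightarrow\mathcal N(0,1)$. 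It then suffices to show (i) $R_N/\sigma^{\MP}_{j,k}=o_P(1)$ and (ii) the variance estimate used in $\mathbb{C}^{\MP}_{j,k}$ is ratio-consistent for $(\sigma^{\MP}_{j,k})^2$; Slutsky's theorem finishes the argument.

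For (i), decompose each summand of $R_N$ as $\widehat{\iloco}_{j,k}(X_i,Y_i)-h_{j,k}(X_i,Y_i)=\big(\widehat{\iloco}_{j,k}(X_i,Y_i)-\bar h_{j,k}(X_i,Y_i)\big)+\big(\bar h_{j,k}(X_i,Y_i)-h_{j,k}(X_i,Y_i)\big)$, where $\bar h_{j,k}(X_i,Y_i)$ is the conditional expectation of $\widehat{\iloco}_{j,k}(X_i,Y_i)$ over the minipatch randomness given the full data. The second, ``leave-one-out bias'' term is handled deterministically: a minipatch contributing to $\hat f_{-i}$ is effectively trained on $n$ of the $N-1$ observations other than $i$, whereas the models defining $h_{j,k}$ are trained on a pool of $N$ observations; these two subsampling distributions differ in total variation by $O(n/N)$, and since predictions across minipatches differ by at most $D$ (Assumption~\ref{assump:bnd_mu}), the ensemble prediction difference is $O(Dn/N)$ at any input, which through the $L$-Lipschitz error (Assumption~\ref{assump:err_lipschitz}) bounds this term by $O(LDn/N)$ uniformly in $i$; hence its contribution to $R_N$ is $O(LDn/\sqrt N)=o(\sigma^{\MP}_{j,k})$ exactly because $n=o(\sigma^{\MP}_{j,k}\sqrt N/(LD))$ in Assumption~\ref{assump:mpsize}. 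For the first, Monte-Carlo term, condition on the data and introduce the good event $\mathcal G$ on which every leave-one-out (and leave-pair-out) count $\sum_b\ind(i\notin I_b)\ind(j,k\notin F_b)$ exceeds a constant multiple of its mean; since $m/M,n/N\le\gamma<1$ (Assumption~\ref{assump:mpsize}) keeps the relevant inclusion probabilities bounded below by a constant, a Chernoff bound and a union bound over the $N$ samples give $\bbP(\mathcal G)\to1$ whenever $B\gg\log N$. On $\mathcal G$ each leave-one-out predictor appearing in $\widehat{\iloco}_{j,k}(X_i,Y_i)$ is an average of $\Omega(B)$ minipatch predictions with pairwise differences bounded by $D$, so Hoeffding's inequality, a union bound over the $N$ samples, and the $L$-Lipschitz error bound $\max_i|\widehat{\iloco}_{j,k}(X_i,Y_i)-\bar h_{j,k}(X_i,Y_i)|$ by $O(LD\sqrt{\log N/B})$ with probability $\to1$; multiplying by $\sqrt N$ this is $O(LD\sqrt{N\log N/B})=o(\sigma^{\MP}_{j,k})$ precisely because $B\gg (D^2L^2N/(\sigma^{\MP}_{j,k})^2)\log N$ (Assumption~\ref{assump:mpnumber}). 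This establishes $R_N/\sigma^{\MP}_{j,k}=o_P(1)$.

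For (ii), the natural variance estimator is the sample variance of $\{\widehat{\iloco}_{j,k}(X_i,Y_i)\}_{i=1}^N$; it differs from the sample variance of $\{h_{j,k}(X_i,Y_i)\}_{i=1}^N$ by quantities controlled by the same two bounds above, and the latter converges in probability to $\mathrm{Var}(h_{j,k}(X,Y))=(\sigma^{\MP}_{j,k})^2$ by the weak law of large numbers, the needed second moment being finite under Assumption~\ref{assump:thirdmoment}.

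I expect the crux to be part (i), and specifically the fact that the $N$ leave-one-out predictors are \emph{not} independent: they are all built from a shared pool of $B$ minipatches, so the summands of $R_N$ have a genuinely complex dependence structure. The resolution is to condition on the data \emph{and} on the minipatch index/feature sets $\{(I_b,F_b)\}_{b=1}^B$, after which each $\widehat{\iloco}_{j,k}(X_i,Y_i)$ is an average of bounded quantities that are independent across $b$, so the dependence across $i$ only costs a $\log N$ factor through a union bound. The delicate accounting is then to line up three scales --- the Monte-Carlo fluctuation $\asymp LD\sqrt{\log N/B}$, the leave-one-out bias $\asymp LDn/N$, and the target rate $\sigma^{\MP}_{j,k}/\sqrt N$ --- which is exactly what Assumptions~\ref{assump:mpsize} and~\ref{assump:mpnumber} are calibrated to guarantee.
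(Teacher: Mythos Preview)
Your overall skeleton (i.i.d.\ CLT for $A_N$, Monte-Carlo error, leave-one-out perturbation, Slutsky) matches the paper, and your treatment of the Monte-Carlo term is essentially what the paper does for its $\varepsilon^{(1)}$. The gap is in the ``leave-one-out bias'' step. Your total-variation argument compares subsampling $n$ from $[N]\setminus\{i\}$ with subsampling $n$ from $[N]$ \emph{on the same observed data}; that indeed gives $O(LDn/N)$, but what it bounds is $|\bar h_{j,k}(X_i,Y_i)-h_{j,k}(X_i,Y_i;\bX,\bY)|$, not $|\bar h_{j,k}(X_i,Y_i)-h_{j,k}(X_i,Y_i)|$. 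Since $h_{j,k}(X_i,Y_i)=\bbE_{\bX',\bY'}[h_{j,k}(X_i,Y_i;\bX',\bY')\mid X_i,Y_i]$ averages over a \emph{fresh} training set, the difference you actually need is the random fluctuation of the score as a function of the realized training data, which is $O_P(1)$ and certainly not $O(n/N)$ pointwise. And you cannot repair this by replacing $h_{j,k}(X_i,Y_i)$ with $h_{j,k}(X_i,Y_i;\bX,\bY)$ in $A_N$: then $A_N$ would no longer be an i.i.d.\ sum and your CLT step would collapse.

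The paper closes this gap with a three-way (not two-way) split of your $R_N$: the Monte-Carlo piece $\varepsilon^{(1)}$, an $O(LDn/N)$ leave-one-out mean shift $\varepsilon^{(2)}$ (this is what your TV argument really controls), and a centered cross-validation fluctuation $\varepsilon^{(3)}_i=\tilde h_{j,k}(X_i,Y_i;\bX_{\setminus i},\bY_{\setminus i})-\bbE[\tilde h_{j,k}\mid\bX_{\setminus i},\bY_{\setminus i}]$ with $\tilde h_{j,k}=h_{j,k}(\cdot\,;\bX_{\setminus i},\bY_{\setminus i})-h_{j,k}(\cdot)$. The sum $N^{-1/2}\sum_i\varepsilon^{(3)}_i$ is not small termwise; it encodes the \emph{data-level} dependence (sample $i$ is a test point for itself but a training point inside every other summand), and the paper controls it via the loss-stability framework of Bayle et al.\ (2020), showing $\gamma_{\mathrm{loss}}(h_{j,k})=O(L^2D^2n^2/N^2)$ under Assumptions~\ref{assump:err_lipschitz}--\ref{assump:mpsize}. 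You correctly anticipated dependence as the crux, but the dependence you address (shared pool of $B$ minipatches) is only the $\varepsilon^{(1)}$ mechanism; the $\varepsilon^{(3)}$ mechanism needs a genuinely different tool, and without it neither (i) nor (ii) goes through. A minor side point: the target $\iloco^{\MP}_{j,k}$ in the paper is conditional on the observed data, not the constant $\bbE h_{j,k}(X,Y)$, and the paper's $\varepsilon^{(2)}$ is written accordingly.
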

The detailed theorems, assumptions, and proofs can be found in the appendix. Note that the proof follows closely from that of \citep{gan2022inference}, with the addition of a third moment condition to imply uniform integrability.  Overall, our work has provided the first model-agnostic and distribution-free inference procedure for feature interactions that is asymptomatically valid under mild assumptions.  Further, note that while \iloco inference via minipatches requires utilizing minipatch ensemble predictors, it gains in both statistical and computational efficiency as it does not require data splitting and conducting inference conditional on only a random portion of the data.

\begin{figure*}[t!]
    \centering
    \includegraphics[width = 0.9\textwidth]{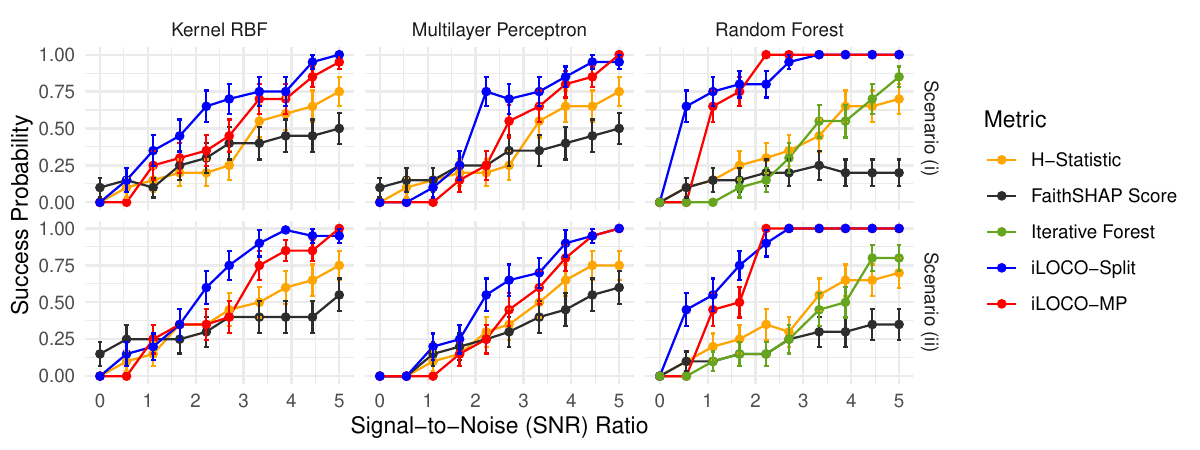}\\
    \caption{\textbf{Comparative Evaluations.} Success probability of detecting feature pair $(1,2)$ across SNR levels for KRBF, RF, and MLP classifiers on nonlinear classification simulations (i) and (ii).}
    \label{fig:comp_rankings}
    \vspace{-.5cm}
\end{figure*}

%%%%%%%%%%%%%%%%%%%%%%%%%%%%%%%%%%%%
\section{Empirical Studies}
\subsection{Simulation Setup and Results}
We design simulation studies to validate the proposed \iloco metric and its inference procedure. Data are generated as $X_{i} \stackrel{i.i.d}{\sim} N(0,\mathbf{I})$ with $M = 10$ features and $N = 500$ samples in the base case. We consider three scenarios spanning classification/regression and linear/nonlinear settings: (i) $f(\mathbf{X}) = snr \cdot (X_1 X_2) + \mathbf{X}\boldsymbol{\beta}$; (ii) $f(\mathbf{X}) = snr \cdot (X_1 X_2) + X_2 X_3 + X_3 X_4 + X_4 X_5 + \mathbf{X}\boldsymbol{\beta}$; and (iii) $f(\mathbf{X}) = snr \cdot (X_1 X_2 X_3) + \mathbf{X}\boldsymbol{\beta}$. Here, $\beta_j \sim N(2, 0.5)$ for $j = 1,\dots,5$ and $\beta_j = 0$ for $j = 6,\dots,10$. Since we focus on detecting the $(1,2)$ interaction, the scalar $snr$ controls its signal strength. For nonlinear variants, we apply a $\tanh$ transformation to each interaction term. In regression, $Y = f(\mathbf{X}) + \epsilon$ with $\epsilon \sim N(0,1)$; in classification, $Y \sim \text{Bern}(\sigma(f(\mathbf{X})))$, where $\sigma$ is the sigmoid function. We set the number of minipatches for \iloco-MP to $B = 10,000$ with minipatch sizes $m = 20\%M$, $n = 20\%N$; however, for the simulations in Figure~\ref{fig:metric_validation}A, we increase the sample size to $N = 10,000$, vary the feature dimensionality with $M \in \{10, 100, 500\}$, and use $B = 200,000$ to stabilize inference. For \iloco-Split, we split the data 50/50 between training and calibration. We compare against baselines including the H-Statistic \citep{Friedman:2008}, FaithSHAP \citep{Tsai:2023}, and Iterative Forests \citep{Basu:2018} and utilize the corresponding available code as written. Performance is evaluated via success probability, the proportion of times the $(1,2)$ pair receives the highest interaction score, allowing consistent comparisons across methods and relative calibration of feature importance.  Results are shown for three model classes: kernel support vector machines with radial basis function kernels (KRBF), multilayer perceptrons (MLP), and random forests (RF). A fixed set of hyperparameters for each model was chosen via cross-validation. The MLP model uses ReLU activiation with a single hidden layer in all experiments except Figure~\ref{fig:metric_validation}A, where a three-hidden-layer architecture is used.

We begin by validating the \iloco metric. Figure~\ref{fig:metric_validation} evaluates its ability to recover true feature interactions under varying signal-to-noise ratios (SNRs) and feature dimensionalities for an MLP classifier. Part (A) reports the success probability of identifying the true interacting pair as SNR increases . Rows correspond to interaction structures: scenario (i) with a pairwise interaction and scenario (iii) with a tertiary interaction. Columns reflect increasing feature dimensionality. The results show that second-order iLOCO reliably identifies the true pairwise interaction in scenario (i).  It can also identify the tertiary interaction in scenario (iii), but with diminished performance compared to the third-order iLOCO which is specifically designed to detect the tertiary interaction. These results align with our theory, thus validating our metric. 

Part (B) assesses detection under feature correlation. Success for iLOCO is defined as correctly identifying the correlated and important pair (1,2) as the pair with the most negative value, while success for LOCO is defined as correctly identifying features 1 and 2 as the top two features  As we expect, the LOCO metric breaks down with high correlation, but our negative iLOCO nicely detects important but correlated features, thus potentially solving an open and challenging problem in feature importance. Additional results for a RF classifier can be found in the supplement.

\begin{figure}[h!]
    \centering
    \includegraphics[width =0.85\textwidth]{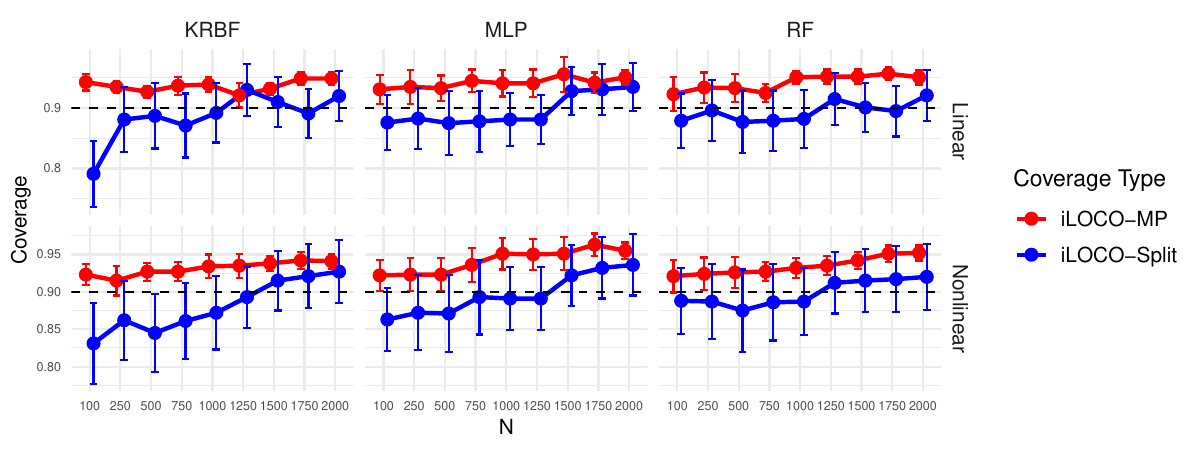}
    \caption{\textbf{Theory Validation.} Coverage of 90\% confidence intervals for a null feature pair in synthetic regression simulation (i) using KRBF, MLP, and RF as the base estimators.}
    \label{fig:coverage}
\end{figure}

Next, we evaluate \iloco's performance in comparison to other feature interaction detection approaches. Figure~\ref{fig:comp_rankings} compares the success probability of detecting the true interacting feature pair (1,2) across increasing SNR levels for various interaction detection methods applied to nonlinear classification scenarios. Results are shown for three model classes, KRBF, MLP, RF, across scenarios (i) (top) and (ii) (bottom). In both scenarios, the pair (1,2) carries signal, so success probability is expected to rise with increasing SNR. Across all model classes and scenarios, \iloco-Split (blue) and \iloco-MP (red) consistently outperform baseline methods, especially at higher SNRs. Notably, FaithSHAP (black) exhibits inflated success probabilities at low SNR, suggesting poor calibration and potential spurious detection of interactions. These findings demonstrate the robustness and accuracy of \iloco across diverse model types and data settings. Additional simulation results for linear and nonlinear, classification and regression, and correlated features with $\boldsymbol{\Sigma} \neq \mathbf{I} $ are in the supplemental material.

\begin{table}[htp!]
\vspace{-.3cm}
\small
\caption{Timing results (seconds) for various dataset sizes using Simulation 1 and the KRBF regressor for all methods except Iterative Forest, where RF regressor was used. As $M$ and $N$ grow, computing interaction importance scores using H-Statistic and Faith Shap becomes infeasible. Note that the (p) indicates the code for that method was distributed across multiple processes.}
\label{tab:timing_results}
\centering
\begin{tabular}{lllll}
\toprule
    Method         & $N$ = 250, $M$ = 10 & $N$ = 500, $M$ = 20 & $N$ = 1000, $M$ = 100 & N = 10000, M = 500 \\ 
    \toprule
H-Statistic          &       285.4         &          97201.2       &           $>$ 6 days & $>$ 6 days       \\ 
Faith-Shap           &          70.2       &             72801.3    &           Not Supported    & Not Supported    \\ 
Iterative Forest     &           14.1      &             17.8  &               20.3 &  76.8 \\
\midrule
iLOCO-Split &       16.8 (p)         &        144.7 (p)         &           738.3 (p)     &   5481.2 (p) \\ 
iLOCO-MP             &     24.3 (p)         &              27.2 (p)   &           38.7 (p)     & 193.5 (p) \\ 
\end{tabular}
\vspace{-.3cm}
\end{table}
Additionally, we construct an empirical study to demonstrate that the interaction feature importance confidence intervals generated by \iloco-MP and \iloco-Split have valid coverage for the inference target.  We compute coverage by evaluating the respective inference targets of iLOCO-Split and iLOCO-MP via Monte Carlo estimates over 10,000 new test points, conditioning on the training set for iLOCO-Split and on the full dataset for iLOCO-MP. For iLOCO-MP, we use $B=10,000$ random minipatches. We evaluate the coverage of the confidence intervals constructed from 50 replicates. Figure~\ref{fig:coverage} shows coverage rates for \iloco-Split and \iloco-MP of 90$\%$ confidence intervals for a null feature pair in regression simulation (i) using various base estimators. \iloco-MP exhibits slight over-coverage whereas \iloco-Split has valid coverage for sufficiently large sample size, as expected with the reduced sample size due to data-splitting and asymptotic coverage results. We include additional studies where SNR = 2 in the supplemental material.  These results validate our statistical inference theory. 

\begin{figure*}[htp!]
    \centering
    \includegraphics[width = \textwidth]{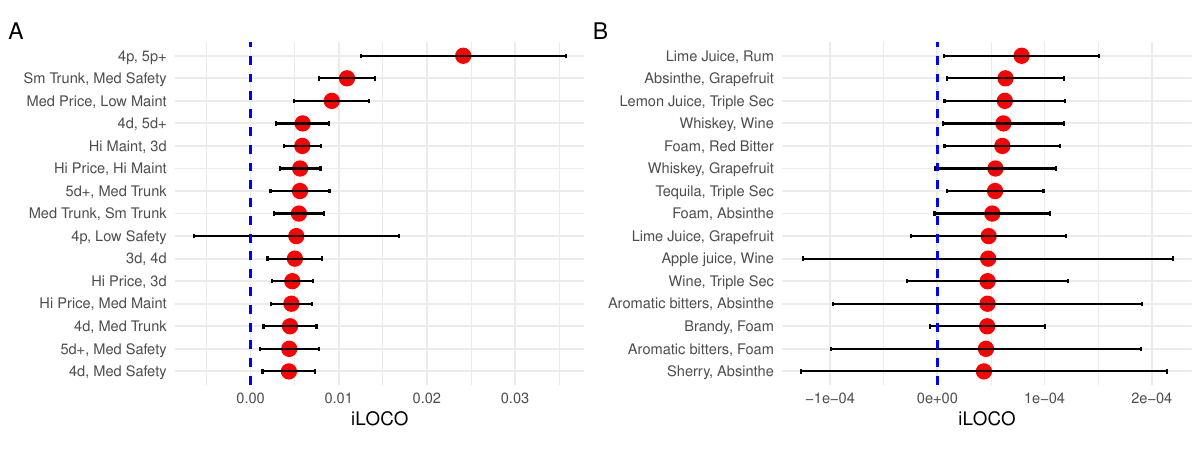}
    \caption{ \iloco (computed via \iloco-MP) marginal confidence intervals ($\alpha = 0.1$; adjusted for multiplicity via Bonferroni) on the Car Evaluation data (A) and Cocktail Recipe data (B). Interactions with confidence intervals that do not contain zero (blue dashed line) are statistically significant.}
    \label{fig:case_study}
\end{figure*}

A key advantage of our \iloco approach, particularly \iloco-MP, is its exceptional computational efficiency. Table~\ref{tab:timing_results} compares the computational time required to calculate feature interaction metrics across all features in the dataset. Results are recorded on an Apple M2 Ultra 24-Core CPU at 3.24 GHz, 76-Core GPU, and 128GB of Unified Memory and across 11 processes when denoted using (p).  As the dataset size (\(N\)) and the number of features (\(M\)) increase, methods like H-Statistic and Faith-Shap quickly become infeasible. In contrast, \iloco-MP shows large efficiency gains, making it scalable to larger datasets without sacrificing performance. While Iterative Forest achieves reasonable computational times, it is restricted to random forest models. The efficiency and model-agnostic nature of \iloco-MP highlight its practicality for real-world use cases involving complex datasets.

\subsection{Case Studies}
We validate \iloco by computing feature importance scores and intervals for two datasets. The Car Evaluation dataset with $N=1728$ and $M=15$ one-hot-encoded features seeks to predict car acceptability (classification task) \cite{car_evaluation_19}.  We fit a minipatch forest model with 10,000 decision trees and a minipatch size of \(m = 20\%M\) and \(n = 20\%M\). Second, we gathered a new Cocktail Recipe dataset by web scraping the Difford's Guide website \citep{cocktail:2025}.  For our analysis, we consider the top $M=100$ most frequent one-hot-encoded ingredients for $N=5,934$ cocktails and the task is to predict the official Difford's Guide ratings (regression task).  We fit a 3-hidden-layer MLP regressor with 20,000 minipatches, setting \(m = 50\%M\) and \(n = 50\%N\). In both cases, we set the error rate \(\alpha = 0.1\) and adjust for multiplicity via Bonferroni. In Figure~\ref{fig:case_study}, feature interactions with confidence intervals that do not contain zero (blue line) are deemed significant.

In Figure~\ref{fig:case_study}A, we present the \iloco-MP scores with Bonferroni adjusted confidence intervals for all feature pairs in the Car Evaluation dataset. Top-ranked interactions include ``medium price \& low maintenance,''  ``4 doors \& 5+ doors,'' and ``high maintenance \& 3+ doors,'' highlighting the importance of affordability, upkeep, and capacity in determining car acceptability. For instance, pairing a medium price with low maintenance reflects an appealing trade-off between cost and reliability. The interaction between ``4 people \& low safety'' also receives a high \iloco score, but its confidence interval includes zero, suggesting a higher degree of uncertainty. FaithSHAP identifies the same top 10 interactions; detailed results are provided in the supplement.

Figure~\ref{fig:case_study}B shows the top 15 ingredient interactions identified by the \iloco-MP metric, with confidence intervals indicating estimation uncertainty. Several pairs, such as ``Lime Juice \& Rum'' and ``Tequila \& Cointreau'' have positive scores with intervals excluding zero, reflecting significant interactions aligned with classic cocktails like the Daiquiri and Margarita. Others, including ``Foam Agent \& Absinthe'' and ``Brandy \& Grapefruit Juice,'' have intervals that include zero, suggesting weaker or inconsistent effects despite high iLOCO values. These results demonstrate \iloco’s ability to recover meaningful ingredient pairings while providing calibrated uncertainty estimates.

%%%%%%%%%%%%%%%%%%%%%%%
\section{Discussion}
In this work, we propose \iloco, a model-agnostic and distribution-free metric to quantify feature interactions.  We also introduce the first inference procedure to construct confidence intervals to measure the uncertainty of feature interactions.  Additionally, we propose a computationally fast way to estimate \iloco and conduct inference using minipatch ensembles, allowing our approach to scale both computationally and statistically to large data sets.  Our empirical studies demonstrate the superior ability of \iloco to detect important feature interactions and highlight the important of uncertainty quantification in this context.  We also briefly introduced using \iloco to detect important correlated features as well as \iloco for higher-order interactions, but future work could consider these important challenges further. Finally, for increasing numbers of features, detecting and testing pairwise and higher-order interactions becomes a major challenge.  Future work could consider adaptive learning strategies, perhaps paired with minipatch ensembles \citep{yao:2020}, to focus both computational and statistical efforts on only the most important interactions in a large data set.

% \subsubsection*{Acknowledgments}
% Blinded.

\section*{Acknowledgements}
The authors acknowledge support from NSF DMS-1554821, NSF NeuroNex-1707400, and NIH 1R01GM140468. COL acknowledges support from the NSF Graduate Research Fellowship Program under grant number 1842494. The authors would like to thank Quan Le, Michael Weylandt, and Tiffany Tang for their meaningful contributions to this manuscript.

\printbibliography
\end{refsection}

\clearpage
\appendix 
\setcounter{figure}{0}
\renewcommand{\thefigure}{A\arabic{figure}}
\setcounter{table}{0}
\renewcommand{\thetable}{A\arabic{table}}
\begin{refsection}
\section{Inference Algorithms}
\begin{algorithm}[htbp!]
\label{alg:iloco-split}
\caption{\iloco-Split Estimation and Inference}
\textbf{Input:} Training data $(\mathbf{X}^{(1)}, \mathbf{Y}^{(1)})$, test data $(\mathbf{X}^{(2)}, \mathbf{Y}^{(2)})$, features $j,k$, base learners $H$.
\begin{enumerate}
    \item Split the data into disjoint training and test sets: $(\mathbf{X}^{(1)}, \mathbf{Y}^{(1)})$, $(\mathbf{X}^{(2)}, \mathbf{Y}^{(2)})$.
    
    \item Train prediction models $\hat{f}$, $\hat{f}^{-j}$, $\hat{f}^{-k}$, $\hat{f}^{-(j,k)}$ on $(\mathbf{X}^{(1)}, \mathbf{Y}^{(1)})$: 
    \begin{equation*}
    \begin{split}
        \hat{f}(X) = H(\mathbf{X}^{(1)}, \mathbf{Y}^{(1)})(X), \quad
        \hat{f}^{-j}(X) = H(\mathbf{X}^{(1), -j}, \mathbf{Y}^{(1)})(X^{-j}), \\
        \hat{f}^{-k}(X) = H(\mathbf{X}^{(1), -k}, \mathbf{Y}^{(1)})(X^{-k}), \quad
        \hat{f}^{-(j,k)}(X) = H(\mathbf{X}^{(1), -(j,k)}, \mathbf{Y}^{(1)})(X^{-(j,k)})
    \end{split}
    \end{equation*}

    \item For the $i$th sample in the test set, compute the feature importance and interaction scores:
    \begin{align*}
        \hat{\Delta}_j(X^{(2)}_i, Y^{(2)}_i) &= \text{Error}(Y^{(2)}_i, \hat{f}^{-j}(X^{(2),-j}_i)) - \text{Error}(Y^{(2)}_i, \hat{f}(X^{(2)}_i)), \\
        \hat{\Delta}_k(X^{(2)}_i, Y^{(2)}_i) &= \text{Error}(Y^{(2)}_i, \hat{f}^{-k}(X^{(2),-k}_i)) - \text{Error}(Y^{(2)}_i, \hat{f}(X^{(2)}_i)), \\
        \hat{\Delta}_{j,k}(X^{(2)}_i, Y^{(2)}_i) &= \text{Error}(Y^{(2)}_i, \hat{f}^{-(j,k)}(X^{(2),-(j,k)}_i)) - \text{Error}(Y^{(2)}_i, \hat{f}(X^{(2)}_i)).
    \end{align*}
    
    \item Calculate iLOCO metric for each test sample $i$:
    \[
    \widehat{\mathrm{\iloco}}_{j,k}(X^{(2)}_i, Y^{(2)}_i) = \hat{\Delta}_j(X^{(2)}_i, Y^{(2)}_i) + \hat{\Delta}_k(X^{(2)}_i, Y^{(2)}_i) - \hat{\Delta}_{j,k}(X^{(2)}_i, Y^{(2)}_i).
    \]

    \item Let $N^{\test}$ be the sample size of the test data. Obtain a $1 - \alpha$ confidence interval for $\mathrm{\iloco}_{j,k}^{\spt}$:
    \[
    \mathbb{C}^{\spt}_{j,k} = \left[\overline{\mathrm{\iloco}}_{j,k} - \frac{z_{\alpha/2} \hat{\sigma}_{j,k}}{\sqrt{N^{\test}}}, \quad \overline{\mathrm{\iloco}}_{j,k} + \frac{z_{\alpha/2} \hat{\sigma}_{j,k}}{\sqrt{N^{\test}}}\right],
    \]
    where
    \begin{align*}
\overline{\mathrm{\iloco}}_{j,k} &= \frac{1}{N^{\test}} \sum_{i=1}^{N^{\test}} \widehat{\mathrm{\iloco}}_{j,k}(X^{(2)}_i, Y^{(2)}_i) \\
\hat{\sigma}_{j,k} &= \sqrt{\frac{1}{N^{\test}-1} \sum_{i=1}^{N^{\test}} \left(\widehat{\mathrm{\iloco}}_{j,k}(X^{(2)}_i, Y^{(2)}_i) - \overline{\mathrm{\iloco}}_{j,k}\right)^2}
\end{align*}
\end{enumerate}
\textbf{Output:} $\mathbb{C}^{\spt}_{j,k}$
\end{algorithm}
%%%%%%%%%%%%%%%%%%%%%%%%%%%%%%%%%%%%%%%%%%%%%%%%%%%%%%%%%%%%%%%%%%%%%%%
\begin{algorithm}[htbp]
\label{alg:ilocomp}
\caption{\iloco-MP Estimation and Inference}
\textbf{Input:} Training pairs $(\mathbf{X}, \mathbf{Y})$, features $j,k$, minipatch sizes $n,m$, number of minipatches $B$, base learners $H$.
\begin{enumerate}
    \item Perform Minipatch Learning: for $b = 1, \dots, B$
    \begin{itemize}
        \item Randomly subsample $n$ observations $I_b \subset [N]$ and $m$ features $F_b \subset [M]$.
        \item Train prediction model $\hat{f}_b$ on $(\mathbf{X}_{I_b, F_b}, \mathbf{Y}_{I_b})$:  
        \[
        \hat{f}_b(X) = H(\mathbf{X}_{I_b,F_b}, \mathbf{Y}_{I_b})(X^{F_b}).
        \]
    \end{itemize}

    \item Obtain predictions:

    \textbf{LOO prediction:}
    \[
    \hat{f}_{-i}(X_i) = \frac{1}{\sum_{b=1}^{B} \mathbb{I}(i \notin I_b)} \sum_{b=1}^{B} \mathbb{I}(i \notin I_b) \hat{f}_b(X_i)
    \]

    \textbf{LOO + LOCO (feature $j$):}
    \[
    \hat{f}^{-j}_{-i}(X_i^{-j}) = \frac{1}{\sum_{b=1}^{B} \mathbb{I}(i \notin I_b) \mathbb{I}(j \notin F_b)} \sum_{b=1}^{B} \mathbb{I}(i \notin I_b) \mathbb{I}(j \notin F_b) \hat{f}_b(X_i^{-j})
    \]

    \textbf{LOO + LOCO (features $j,k$):}
    \[
    \hat{f}^{-(j,k)}_{-i}(X_i^{-(j,k)}) = \frac{1}{\sum_{b=1}^{B} \mathbb{I}(i \notin I_b) \mathbb{I}(j,k \notin F_b)} \sum_{b=1}^{B} \mathbb{I}(i \notin I_b) \mathbb{I}(j,k \notin F_b) \hat{f}_b(X_i^{-(j,k)})
    \]

    \item Calculate LOO Feature Occlusion:
    \begin{align*}
    \hat{\Delta}_j(X_i, Y_i) &= \text{Error}(Y_i, \hat{f}^{-j}_{-i}(X_i^{-j})) - \text{Error}(Y_i, \hat{f}_{-i}(X_i)) \\
    \hat{\Delta}_k(X_i, Y_i) &= \text{Error}(Y_i, \hat{f}^{-k}_{-i}(X_i^{-k})) - \text{Error}(Y_i, \hat{f}_{-i}(X_i)) \\
    \hat{\Delta}_{j,k}(X_i, Y_i) &= \text{Error}(Y_i, \hat{f}^{-(j,k)}_{-i}(X_i^{-(j,k)})) - \text{Error}(Y_i, \hat{f}_{-i}(X_i))
    \end{align*}

    \item Calculate iLOCO Metric for each sample $i$:
    \[
    \widehat{\mathrm{\iloco}}_{j,k}(X_i, Y_i) = \hat{\Delta}_j(X_i, Y_i) + \hat{\Delta}_k(X_i, Y_i) - \hat{\Delta}_{j,k}(X_i, Y_i)
    \]

    \item Obtain a $1 - \alpha$ confidence interval for $\mathrm{\iloco}_{j,k}^{\MP}$:
    \[
    \mathbb{C}^{\MP}_{j,k} = \left[\overline{\mathrm{\iloco}}_{j,k} - \frac{z_{\alpha/2} \hat{\sigma}_{j,k}}{\sqrt{N}}, \quad \overline{\mathrm{\iloco}}_{j,k} + \frac{z_{\alpha/2} \hat{\sigma}_{j,k}}{\sqrt{N}} \right]
    \]
    where
    \[
    \overline{\mathrm{\iloco}}_{j,k} = \frac{1}{N} \sum_{i=1}^{N} \widehat{\mathrm{\iloco}}_{j,k}(X_i, Y_i)
    \]
    \[
    \hat{\sigma}_{j,k} = \sqrt{\frac{1}{N-1} \sum_{i=1}^{N} \left( \widehat{\mathrm{\iloco}}_{j,k}(X_i, Y_i) - \overline{\mathrm{\iloco}}_{j,k} \right)^2}
    \]
\end{enumerate}
\textbf{Output:} $\mathbb{C}^{\MP}_{j,k}$
\end{algorithm}
%%%%%%%%%%%%%%%%%%%%%%%%%%%%%%%%%%%%%%%%%%%%%%%%%%%%%%%%%%%%%%%%%%%%%%%%%%%
\newpage
\section{Inference Theory for iLOCO-Split}
We restate our assumptions and theory in Section 3.1 of the main paper with more details.

Recall our inference target: $\mathrm{\iloco}_{j,k}^{\spt} = \Delta_j^{\spt}+ \Delta_k^{\spt}-\Delta_{j,k}^{\spt}$, where  $\Delta_{j,k}^{\spt} = \bbE[\err(Y,\hat{f}^{-(j,k)}(X)) - \err(Y,\hat{f}(X))|\bX^{(1)},\bY^{(1)}],$ and $\Delta_j^{\spt}$, $\Delta_{k}^{\spt}$ are defined similarly. 

\begin{assump}\label{assump:split_thirdmoment_full}
    The normalized iLOCO score on a random test sample satisfies the third moment assumption:\\
    $\bbE(\widehat{\mathrm{\iloco}}_{j,k}(X_i^{(2)},Y_i^{(2)})-\mathrm{\iloco}^{\spt}_{j,k})^3/(\sigma^{\spt}_{j,k})^3\leq C$, where $(\sigma^{\spt}_{j,k})^2=\mathrm{Var}(\widehat{\mathrm{\iloco}}_{j,k}(X_i^{(2)},Y_i^{(2)})|\mathbf{X}^{(1)},\mathbf{Y}^{(1)})$.
\end{assump}
We require Assumption \ref{assump:split_thirdmoment_full} to establish the central limit theorem for the iLOCO metrics evaluated on the test data. Prior theory on LOCO-Split \citep{rinaldo:2019} does not have this assumption, but they focus on a truncated linear predictor and consider injecting noise into the LOCO scores, which can imply a third moment assumption similar to Assumption \ref{assump:split_thirdmoment_full}.
\begin{theorem}[Coverage of iLOCO-Split]\label{thm:coverage_Split_full}
     Suppose that data $(X_i,Y_i)\overset{i.i.d.}{\sim}\mathcal{P}$. Under Assumption \ref{assump:split_thirdmoment_full}, we have $\lim_{N^{\test}\rightarrow\infty}\bbP(\mathrm{\iloco}^{\spt}_{j,k}\in \mathbb{C}^{\spt}_{j,k})= 1-\alpha$. Here, $N^{\test}$ is the sample size of the test set $D_2 = (\bX^{(2)},\bY^{(2)})$.
\end{theorem}
\begin{proof}
    Recall our definition of $\widehat{\mathrm{\iloco}}_{j,k}(X_i^{(2)},Y_i^{(2)})$ in Algorithm \ref{alg:iloco-split}:
    $$
    \widehat{\mathrm{\iloco}}_{j,k}(X^{(2)}_i, Y^{(2)}_i) = \hat{\Delta}_j(X^{(2),-j}_i, Y^{(2)}_i) + \hat{\Delta}_k(X^{(2),-k}_i, Y^{(2)}_i) - \hat{\Delta}_{j,k}(X^{(2),-(j,k)}_i, Y^{(2)}_i).
    $$
    Since we have assumed that all samples of the test data \\
    $(X_i^{(2)},Y_i^{(2)})\overset{i.i.d.}{\sim}\mathcal{P}$, we note that $\bbE(\hat{\Delta}_j(X^{(2)}_i, Y^{(2)}_i)|\bX^{(1)},\bY^{(1)}) = \Delta_j^{\spt}$, $\bbE(\hat{\Delta}_k(X^{(2)}_i, Y^{(2)}_i)|\bX^{(1)},\bY^{(1)}) = \Delta_k^{\spt}$, $\bbE(\hat{\Delta}_k(X^{(2)}_i, Y^{(2)}_i)|\bX^{(1)},\bY^{(1)}) = \Delta_{j,k}^{\spt}$,
    and hence
    $$\bbE[\widehat{\mathrm{\iloco}}_{j,k}(X_i^{(2)},Y_i^{(2)})|\bX^{(1)},\bY^{(1)}]=\Delta_j^{\spt}+\Delta_k^{\spt}-\Delta_{j,k}^{\spt}=\mathrm{\iloco}^{\spt}_{j,k}.$$
    Furthermore, due to Assumption \ref{assump:split_thirdmoment_full}, the Lyapunov condition holds for $\widehat{\mathrm{\iloco}}_{j,k}(X_i^{(2)},Y_i^{(2)})-\bbE(\widehat{\mathrm{\iloco}}_{j,k}(X_i^{(2)},Y_i^{(2)})|\bX^{(1)},\bY^{(1)})/\sigma_{j,k}^{\spt}$, conditional on the training set $\bX^{(1)},\bY^{(1)}$, and hence we can invoke the central limit theorem to obtain that
    \begin{equation*}
        \frac{1}{\sigma^{\spt}_{j,k}\sqrt{N^{\test}}}\sum_{i=1}^{N^{\test}}[\widehat{\mathrm{\iloco}}_{j,k}(X_i^{(2)},Y_i^{(2)})-\mathrm{\iloco}^{\spt}_{j,k}]\overset{d.}{\rightarrow}\mathcal{N}(0,1).
    \end{equation*}
    Now, it remains to show the consistency of the variance estimate $\hat{\sigma}_{j,k} = \frac{1}{N^{\test}-1}\sum_{i=1}^{N^{\test}}(\widehat{\mathrm{\iloco}}(X_i^{(2)},Y_i^{(2)})-\overline{\mathrm{\iloco}})^2$ for $(\sigma^{\spt}_{j,k})^2$.

    Define the random variable $\xi_{N^{\test},i} = [\widehat{\mathrm{\iloco}}_{j,k}(X_i^{(2)},Y_i^{(2)})-\mathrm{\iloco}^{\spt}_{j,k}]^2/(\sigma_{j,k}^{\spt})^2$. We aim to show that $\frac{1}{N^{\test}-1}\sum_{i=1}^{N^{\test}} \xi_{N^{\test},i}\overset{p}{\rightarrow} 1$. By Assumption \ref{assump:split_thirdmoment_full}, we have $\bbE|\xi_{N^{\test},i}|^{3/2}=o(\sqrt{N})$. This implies the uniform integrability of $\{\xi_{N^{\test},i}\}_i$: 
    \begin{equation*}
        \begin{split}
    \bbE[|\xi_{N^{\test},i}|\ind(|\xi_{N^{\test},i}|>x)] \leq & [\bbE|\xi_{N^{\test},i}|^{3/2}]^{\frac{2}{3}}[\bbP(|\xi_{N^{\test},i}|>x)]^{1/3}\\
    \leq &[\bbE|\xi_{N^{\test},i}|^{3/2}]^{\frac{2}{3}}[\frac{\bbE|\xi_{N^{\test},i}|}{x}]^{1/3}\\
    = & C^{2/3}x^{-1/3},
        \end{split}  
    \end{equation*}
    which converges to zero as $x\rightarrow \infty$.
Here, we applied Holder's inequality on the first line, and applied Assumption \ref{assump:split_thirdmoment_full} on the last line. With the uniform integrability of $\xi_{N^{\test},i}$, we can follow the last part of the proof of Theorem 4 in \cite{bayle:2020}, and show that $$\frac{1}{N^{\test}}\sum_{i=1}^{N^{\test}} \xi_{N^{\test},i}-1\overset{p}{\rightarrow} 0,$$ which then implies $\frac{1}{N^{\test}-1}\sum_{i=1}^{N^{\test}} \xi_{N^{\test},i}\overset{p}{\rightarrow} 1$ as $N^{\test}\rightarrow\infty$, and hence $\hat{\sigma}_{j,k}/\sigma_{j,k}^{\MP}\overset{p}{\rightarrow}1$.

    Therefore, by Slutsky's theorem, we have 
    \begin{equation*}
        \frac{1}{\hat{\sigma}_{j,k}\sqrt{N^{\test}}}\sum_{i=1}^{N^{\test}}[\widehat{\mathrm{\iloco}}_{j,k}(X_i^{(2)},Y_i^{(2)})-\mathrm{\iloco}^{\spt}_{j,k}]\overset{d.}{\rightarrow}\mathcal{N}(0,1),
    \end{equation*}
    which implies the asymptotically valid coverage of $\mathbb{C}^{\spt}_{j,k}$ for $\mathrm{\iloco}_{j,k}^{\spt}$.
\end{proof}
%%%%%%%%%%%%%%%%%%%%%%%%%%%%%%%%%%%%%%%%%%%%%%%%%%%%%%%%%%%%%%%%%%%%%%%%%%%%
\newpage
\section{Inference Theory for iLOCO-MP}
In this section, we restate the notations, assumptions and theorem in Section 3.2 of the main paper with more details.
 
{\bf Inference target for iLOCO-MP}: Let $$f(X) = \frac{1}{\binom{N}{n}\binom{M}{m}}\sum_{\substack{I\subset[N],|I|=n\\ F\subset[M],|F|=m}}H(\bX_{I,F},Y_F)(X^F)$$ be the minipatach ensemble predictor, when taking expectation over the randomly subsampled minipatches. The random minipatch ensemble $\hat{f}(X) =\frac{1}{B}\sum_{b=1}^B \hat{f}_b(X)$ converges to $f(X)$ as $B\rightarrow \infty$. Also define $f^{-j}(X) = \frac{1}{\binom{N}{n}\binom{M-1}{m}}\sum_{\substack{I\subset[N],|I|=n\\ F\subset[M]\backslash j,|F|=m}}H(\bX_{I,F},Y_F)(X^F)$, and $f^{-k}, f^{-(j,k)}$ similarly. We can then write out the iLOCO inference target for minipatch ensembles as follows:
\begin{equation}\label{eq:iLOCO_MP_target}
    \mathrm{\iloco}^{\MP}_{j,k} = \Delta_j^{\MP} + \Delta_k^{\MP} - \Delta_{j,k}^{\MP},
\end{equation}
where $\Delta_{j,k}^{\MP} = \bbE_{X,\,Y}[\err(Y,f^{-(j,k)}(X^{-(j,k)};\mathbf{X}^{-(j,k)},\mathbf{Y})) - \err(Y,f(X;\mathbf{X},\mathbf{Y}))|\mathbf{X},\,\mathbf{Y}]$, and $\Delta_j^{\MP},\,\Delta_k^{\MP}$ are defined similarly.

For technical expositions, let 
\begin{equation*}
    \begin{split}
        h_{j,k}(X,Y;\mathbf{X},\mathbf{Y}) = &\err(Y,f^{-j}(X^{-j};\mathbf{X}^{-j},\mathbf{Y}))+\err(Y,f^{-k}(X^{-k};\mathbf{X}^{-k},\mathbf{Y}))\\
        &-\err(Y,f^{-(j,k)}(X^{-(j,k)};\mathbf{X}^{-(j,k)},\mathbf{Y})) - \err(Y,f(X;\mathbf{X},\mathbf{Y}))
    \end{split}
\end{equation*} be the interaction importance score of feature pair $(j,k)$, when using the model trained on $(\mathbf{X},\mathbf{Y})$ to predict data $(X,Y)$. Our inference target $\mathrm{\iloco}^{\MP}_{j,k}$ defined in \eqref{eq:iLOCO_MP_target} can also be written as follows:
\begin{equation*}
    \mathrm{\iloco}^{\MP}_{j,k} = \bbE_{X,\,Y}[h_{j,k}(X,\,Y;\mathbf{X},\,\mathbf{Y})|\mathbf{X},\,\mathbf{Y}],
\end{equation*}
where $(X,\,Y)$ is independent of the training data $(\mathbf{X},\,\mathbf{Y})$. 

{\bf Function $h_{j,k}(X,Y)$ and variance $(\sigma_{j,k}^{\MP})^2$}: Also define $$h_{j,k}(X,\,Y) = \bbE_{\mathbf{X},\,\mathbf{Y}}[h_{j,k}(X,\,Y;\mathbf{X},\,\mathbf{Y})|X,Y],$$ where the expectation is taken over the training but not test data. Its variance $\mathrm{Var}(h_{j,k}(X_i,Y_i))$ is denoted by $(\sigma^{\MP}_{j,k})^2$. 

Moreover, let $\hat{h}_{j,k}(X_i,Y_i;\mathbf{X}_{-i},\mathbf{Y}_{-i})=\widehat{\mathrm{\iloco}}_{j,k}(X_i,Y_i)$, the estimated pairwise interaction importance score at sample $i$ in Algorithm \ref{alg:ilocomp}. Define $\tilde{h}_{j,k}(X_i,Y_i;\mathbf{X}_{-i},\mathbf{Y}_{-i})=h_{j,k}(X_i,Y_i;\mathbf{X}_{-i},\mathbf{Y}_{-i})-h_{j,k}(X_i,Y_i)$. Denote the trained predictor on each minipatch $(I,F)$ by $\hat{f}_{I,F}(\cdot)$.

\begin{assump}\label{assump:thirdmoment_full}
    The normalized interaction importance r.v. satisfies the third moment condition: $\bbE[h_{j,k}(X,\,Y) - \bbE h_{j,k}(X,\,Y)]^3/(\sigma^{\MP}_{j,k})^3 \leq C$.
\end{assump}

\begin{assump}\label{assump:err_lipschitz_full}
    The error function is Lipschitz continuous w.r.t. the prediction: for any $Y\in \bbR$ and any predictions $\hat{Y}_1,\,\hat{Y}_2\in \bbR^d$,
    $$
    |\err(Y,\hat{Y}_1)-\err(Y,\hat{Y}_2)|\leq L\|\hat{Y}_1-\hat{Y}_2\|_2.
    $$
\end{assump}

\begin{assump}\label{assump:bnd_mu_full}
    The prediction difference between different minipatches are bounded: $\|\hat{f}_{I,F}(X)-\hat{f}_{I',F'}(X)\|_2\leq D$ for any $X$, any minipatches $I,\,I'\subset[N]$, $F,\,F'\subset[M]$.
\end{assump}

\begin{assump}\label{assump:mpsize_full}
    The minipatch sizes $(m,\,n)$ satisfy $\frac{m}{M},\,\frac{n}{N}\leq \gamma$ for some constant $0<\gamma<1$, and $n=o\left(\frac{\sigma^{\MP}_{j,k}}{LD}\sqrt{N}\right)$.
\end{assump}

\begin{assump}\label{assump:mpnumber_full}
     The number of minipatches satisfies $B\gg\left(\frac{D^2L^2N}{(\sigma^{\MP}_{j,k})^2}+1\right)\log N$.
\end{assump}

\begin{theorem}[Coverage of iLOCO-MP]
    Suppose the training data $(X_i,Y_i)$ are independent, identically distributed. Under Assumptions \ref{assump:thirdmoment_full}-\ref{assump:mpnumber_full}, we have $\lim_{N\rightarrow\infty}\bbP(\mathrm{\iloco}^{\MP}_{j,k}\in \mathbb{C}^{\MP}_{j,k})= 1-\alpha$.
\end{theorem}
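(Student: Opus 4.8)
The plan is to prove the studentized central limit theorem $\frac{\sqrt N}{\hat\sigma_{j,k}}\bigl(\overline{\iloco}_{j,k}-\iloco^{\MP}_{j,k}\bigr)\overset{d}{\to}\mathcal N(0,1)$, which immediately yields $\lim_{N\to\infty}\bbP(\iloco^{\MP}_{j,k}\in\mathbb C^{\MP}_{j,k})=1-\alpha$. First I would decompose, for each $i$,
$$\widehat{\iloco}_{j,k}(X_i,Y_i)=h_{j,k}(X_i,Y_i)+\tilde h_{j,k}(X_i,Y_i;\bX_{\backslash i},\bY_{\backslash i})+\bigl(\hat h_{j,k}(X_i,Y_i;\bX_{\backslash i},\bY_{\backslash i})-h_{j,k}(X_i,Y_i;\bX_{\backslash i},\bY_{\backslash i})\bigr),$$
an i.i.d.\ ``oracle'' term, a leave-one-out fluctuation term, and a finite-$B$ Monte-Carlo term. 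Averaging over $i$ and telescoping against $\iloco^{\MP}_{j,k}$, the leave-one-out average splits further into a centered piece $\frac1N\sum_i\bigl(\tilde h_{j,k}(X_i,Y_i;\bX_{\backslash i},\bY_{\backslash i})-\bbE[\tilde h_{j,k}(X_i,Y_i;\bX_{\backslash i},\bY_{\backslash i})\mid \bX_{\backslash i},\bY_{\backslash i}]\bigr)$ and a deterministic ``jackknife-bias'' piece $\frac1N\sum_i\iloco^{\MP}_{j,k}(\bX_{\backslash i},\bY_{\backslash i})-\iloco^{\MP}_{j,k}$, where $\iloco^{\MP}_{j,k}(\bX_{\backslash i},\bY_{\backslash i})$ is the inference target of the ensemble trained on all but sample $i$. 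The telescoping makes the spurious centering constant $\bbE h_{j,k}(X,Y)$ cancel, so it remains to show the oracle average drives a CLT at scale $\sigma^{\MP}_{j,k}/\sqrt N$ and that the other three pieces are $o_p(\sigma^{\MP}_{j,k}/\sqrt N)$.

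The oracle CLT is immediate: the $h_{j,k}(X_i,Y_i)$ are i.i.d.\ with variance $(\sigma^{\MP}_{j,k})^2$, Assumption~\ref{assump:thirdmoment} supplies the Lyapunov third-moment condition, and so $\frac{1}{\sigma^{\MP}_{j,k}\sqrt N}\sum_i(h_{j,k}(X_i,Y_i)-\bbE h_{j,k}(X,Y))\overset{d}{\to}\mathcal N(0,1)$, exactly as in the iLOCO-Split proof. The Monte-Carlo term is handled by concentration: conditional on the data, each of $\hat f_{-i},\hat f^{-j}_{-i},\hat f^{-k}_{-i},\hat f^{-j,k}_{-i}$ is an average of $\Theta(B)$ minipatch predictions (the count concentrates by a Chernoff bound and a union over the $N$ samples, using $m/M,n/N\le\gamma$ in Assumption~\ref{assump:mpsize}), pairwise minipatch predictions differ by at most $D$ (Assumption~\ref{assump:bnd_mu}), so a Hoeffding/Bernstein bound composed with the $L$-Lipschitz error (Assumption~\ref{assump:err_lipschitz}) controls the per-sample Monte-Carlo error by $O(LD\sqrt{\log N/B})$ uniformly in $i$, which Assumption~\ref{assump:mpnumber} ($B\gg(D^2L^2N/(\sigma^{\MP}_{j,k})^2+1)\log N$) renders $o(\sigma^{\MP}_{j,k}/\sqrt N)$. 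The jackknife-bias term is purely deterministic: one training observation enters only an $n/N$-fraction of the $n$-subsets defining the minipatch ensemble and perturbs each affected prediction by at most $D$, so by Assumptions~\ref{assump:err_lipschitz}–\ref{assump:bnd_mu} one gets $|\iloco^{\MP}_{j,k}(\bX_{\backslash i},\bY_{\backslash i})-\iloco^{\MP}_{j,k}|\le cLnD/N$ uniformly, hence the jackknife-bias piece is at most $cLnD/N=o(\sigma^{\MP}_{j,k}/\sqrt N)$ by Assumption~\ref{assump:mpsize} ($n=o(\sigma^{\MP}_{j,k}\sqrt N/(LD))$).

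The centered leave-one-out average is the main obstacle: the summands share all of the data, they do not decouple, and crude covariance bounds on $N$ highly dependent terms are far too lossy. The structural fact I would exploit is that the minipatch ensemble is a U-statistic-type object with small kernel size $n$, so resampling a single training observation perturbs each prediction by only $O(n/N)$ (together with Assumption~\ref{assump:bnd_mu}), and this perturbation is dominated by a first-order (Hájek) component that is a sum of $N$ independent influence terms; combined with the fact that each summand is centered over its own ``test'' coordinate given the rest, a careful iterated Efron–Stein / McDiarmid argument — isolating, for each resampled observation, its dominant effect on the single leave-it-out summand from its weak effects on the other $N-1$ summands — shows the variance of this centered average is $o((\sigma^{\MP}_{j,k})^2/N)$ precisely under Assumption~\ref{assump:mpsize}. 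Equivalently, one may recognize the average as a leave-one-out/cross-validation-type statistic and invoke the central limit theorem of \cite{bayle:2020} with ``loss stability'' parameter of order $LnD/N$, adapted to minipatch ensembles as in \cite{gan2022inference}; this is the technical heart, the step I expect to be hardest, and the place where the argument follows that of \cite{gan2022inference}, the additional third-moment condition (Assumption~\ref{assump:thirdmoment}) being what supplies uniform integrability.

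Finally, for the variance estimate I would mirror the iLOCO-Split proof: Assumption~\ref{assump:thirdmoment} gives $\bbE|\xi_{N,i}|^{3/2}=O(1)$ for $\xi_{N,i}=(h_{j,k}(X_i,Y_i)-\bbE h_{j,k}(X,Y))^2/(\sigma^{\MP}_{j,k})^2$, which by Hölder's inequality yields uniform integrability and hence the weak law $\frac1N\sum_i\xi_{N,i}\overset{p}{\to}1$ in the manner of Theorem~4 of \cite{bayle:2020}; since the three perturbation pieces are $o_p(\sigma^{\MP}_{j,k}/\sqrt N)$ they are negligible in the rescaled sum of squares as well, so $\hat\sigma_{j,k}/\sigma^{\MP}_{j,k}\overset{p}{\to}1$. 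Slutsky's theorem then combines the oracle CLT, the negligibility of the three perturbations, and the consistency of $\hat\sigma_{j,k}$ to conclude.
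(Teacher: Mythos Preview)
Your proposal is correct and follows essentially the same approach as the paper: the paper uses the identical four-way decomposition into an oracle i.i.d.\ term plus $\varepsilon^{(1)}$ (your Monte-Carlo piece), $\varepsilon^{(2)}$ (your jackknife-bias piece), and $\varepsilon^{(3)}$ (your centered leave-one-out piece), handles each exactly as you outline---Lyapunov CLT for the oracle, Hoeffding-type concentration for $\varepsilon^{(1)}$, the deterministic $O(LDn/N)$ bound for $\varepsilon^{(2)}$, and the loss-stability bound $\gamma_{\mathrm{loss}}(h_{j,k})\le 16L^2D^2n^2/(N-1)^2$ plugged into the framework of \cite{bayle:2020,gan2022inference} for $\varepsilon^{(3)}$---and establishes $\hat\sigma_{j,k}/\sigma^{\MP}_{j,k}\overset{p}{\to}1$ via the same uniform-integrability argument from the third-moment assumption. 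Your identification of $\varepsilon^{(3)}$ as the technical heart, and of \cite{bayle:2020,gan2022inference} as the key references there, matches the paper exactly.
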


\begin{proof}
    Our proof closely follows the argument and results in \cite{gan2022inference}. First, we can decompose the estimation error of the iLOCO interaction importance score at sample $i$ as follows:
    \begin{equation*}
        \begin{split}
            &\widehat{\mathrm{\iloco}}_{j,k}(X_i,Y_i) - \mathrm{\iloco}^{\MP}_{j,k}\\
            =&\, h_{j,k}(X_i,Y_i) - \bbE[h_{j,k}(X_i,Y_i)] + \hat{h}_{j,k}(X_i,Y_i;\mathbf{X}_{-i},\mathbf{Y}_{-i}) - h_{j,k}(X_i,Y_i;\mathbf{X}_{-i},\mathbf{Y}_{-i})\\
            &+ \bbE[h_{j,k}(X_i,Y_i;\mathbf{X}_{-i},\mathbf{Y}_{-i})|\mathbf{X}_{-i},\mathbf{Y}_{-i}] - \mathrm{\iloco}^{\MP}_{j,k}\\
            &+ \tilde{h}_{j,k}(X_i,Y_i;\mathbf{X}_{-i},\mathbf{Y}_{-i}) - \bbE[\tilde{h}_{j,k}(X_i,Y_i;\mathbf{X}_{-i},\mathbf{Y}_{-i})|\mathbf{X}_{-i},\mathbf{Y}_{-i}]\\
            =:&\, h_{j,k}(X_i,Y_i) - \bbE[h_{j,k}(X_i,Y_i)] + \varepsilon_i^{(1)} + \varepsilon_i^{(2)} + \varepsilon_i^{(3)},
        \end{split}
    \end{equation*}
    where 
  \begin{equation*}
    \begin{split}
        \varepsilon_i^{(1)} =\ &\hat{h}_{j,k}(X_i,Y_i;\mathbf{X}_{-i},\mathbf{Y}_{-i}) - h_{j,k}(X_i,Y_i;\mathbf{X}_{-i},\mathbf{Y}_{-i}),\\
        \varepsilon_i^{(2)} =\ &\bbE[h_{j,k}(X_i,Y_i;\mathbf{X}_{-i},\mathbf{Y}_{-i}) \mid \mathbf{X}_{-i},\mathbf{Y}_{-i}] - \mathrm{\iloco}^{\MP}_{j,k},\\
        \varepsilon_i^{(3)} =\ &\tilde{h}_{j,k}(X_i,Y_i;\mathbf{X}_{-i},\mathbf{Y}_{-i}) - \bbE[\tilde{h}_{j,k}(X_i,Y_i;\mathbf{X}_{-i},\mathbf{Y}_{-i}) \mid \mathbf{X}_{-i},\mathbf{Y}_{-i}].
    \end{split}
\end{equation*}
Let $\varepsilon^{(k)} = \frac{1}{\sigma^{\MP}_{j,k}\sqrt{N}}\sum_{i=1}^N\varepsilon_i^{(k)}, k=1,\dots,3$, where $\sigma^{\MP}_{j,k}$ is the standard deviation of $h_{j,k}(X_i,Y_i)$, as we defined in notations. Assumption \ref{assump:thirdmoment_full}, $\{h_{j,k}(X_i,Y_i)\}_{i=1}^N$ satisfy the Lyapunov condition, and hence we can apply the central limit theorem to obtain that $\frac{1}{\sigma^{\MP}_{j,k}\sqrt{N}}\sum_{i=1}^N h_{j,k}(X_i,Y_i)-\bbE[h_{j,k}(X_i,Y_i)] \overset{d.}{\rightarrow} \mathcal{N}(0,1).$ Following the same arguments as the proof of Theorem 1, 2 in \cite{gan2022inference}, we only need to show that $\varepsilon^{(1)},\,\varepsilon^{(2)},\,\varepsilon^{(3)}$ converge to zero in probability, and $\hat{\sigma}_{j,k}\overset{p}{\rightarrow}\sigma^{\MP}_{j,k}$, where $\hat{\sigma}_{j,k}^2 = \frac{1}{N-1}\sum_{i=1}^N(\widehat{\mathrm{\iloco}}_{j,k}(X_i,Y_i) - \overline{\mathrm{\iloco}}_{j,k})^2$ is defined as in Algorithm \ref{alg:ilocomp}.

\paragraph{Convergence of $\varepsilon^{(1)}$.} $\varepsilon^{(1)}$ characterizes the deviation of the random minipatch algorithm from its population version. In particular, by Assumption \ref{assump:err_lipschitz_full}, we can write
\begin{equation*}
\begin{split}
    |\varepsilon^{(1)}|\leq \frac{L}{\sigma^{\MP}_{j,k}\sqrt{N}}\sum_{i=1}^N\Big(&\|f_{-i}^{-(j,k)}(X_i) - \hat{f}_{-i}^{-(j,k)}(X_i)\|_2+\|f_{-i}^{-j}(X_i) - \hat{f}_{-i}^{-j}(X_i)\|_2\\
    &+\|f_{-i}^{-k}(X_i) - \hat{f}_{-i}^{-k}(X_i)\|_2+\|f_{-i}(X_i) - \hat{f}_{-i}(X_i)\|_2\Big),
\end{split}
\end{equation*}
%%%%%%%%%%%%%where I stopped
where 
$$f_{-i}^{-(j,k)}(X_i) = \frac{1}{\binom{N-1}{n}\binom{M-2}{m}}\sum_{\substack{I\subset[N],|I|=n\\F\subset[M],|F|=m}}\ind(i\notin I)\ind(j,k\notin F)\hat{f}_{I,F}(X_i),$$ 
and 
$$\hat{f}_{-i}^{-(j,k)}(X_i) = \frac{1}{\sum_{b=1}^B\ind(i\notin I_b)\ind(j,k\notin F_b)}\sum_{b=1}^B\ind(i\notin I_b)\ind(j,k\notin F_b)\hat{f}_{I_b,F_b}(X_i).$$ 
$f_{-i}^{-j}(X_i)$, $\hat{f}_{-i}^{-j}(X_i)$, $f_{-i}^{-k}(X_i)$, $\hat{f}_{-i}^{-k}(X_i)$, $f_{-i}(X_i)$, and $\hat{f}_{-i}(X_i)$ are defined similarly. We then follow the same arguments as those in Section A.7.1 of \cite{gan2022inference} to bound the MP predictor differences. The only difference lies in bounding $\|f_{-i}^{-(j,k)}(X_i) - \hat{f}_{-i}^{-(j,k)}(X_i)\|_2$, where we need to concentrate $\frac{\sum_{b=1}^B\ind(i\in I_b)\ind(j,k\notin F_b)}{B}$ around $\bbP(i\in I_b,\,j,k\notin F_b)$. Since Assumption \ref{assump:mpsize_full} suggests that $\bbP(i\in I_b,\,j,k\notin F_b)\geq (1-\frac{n}{N})(1-\frac{m}{M})(1-\frac{m}{M-1})$ is lower bounded, all arguments in \cite{gan2022inference} can be similarly applied here, which also use Assumption \ref{assump:bnd_mu_full} and Assumption \ref{assump:mpnumber_full}. These arguments then lead to $\lim_{N\rightarrow\infty}\bbP(|\varepsilon^{(1)}|>\epsilon)=0$ for any $\epsilon>0$.

\paragraph{Convergence of $\varepsilon^{(2)}$.} $\varepsilon^{(2)}$ captures the difference in iLOCO importance scores caused by excluding one training sample. In particular, we can write
\begin{equation*}
    \begin{split}
        |\varepsilon^{(2)}|\leq\frac{L}{\sigma^{\MP}_{j,k}\sqrt{N}}\sum_{i=1}^N\Big(&\bbE_X\|f_{-i}^{-(j,k)}(X)-f^{-(j,k)}(X)\|_2+\bbE_X\|f_{-i}^{-j}(X)-f^{-j}(X)\|_2\\
        &+\bbE_X\|f_{-i}^{-k}(X)-f^{-k}(X)\|_2+\bbE_X\|f_{-i}(X)-f(X)\|_2\Big),
    \end{split}
\end{equation*}
where
\begin{equation*}
    \begin{split}
        f^{-(j,k)}(X)=&\frac{1}{\binom{N}{n}\binom{M-2}{m}}\sum_{\substack{I\subset [N],|I|=n\\F\subset[M],|F|=m}}\ind(j,k\notin F)\hat{f}_{I,F}(X),\\
        f^{-j}(X)=&\frac{1}{\binom{N}{n}\binom{M-1}{m}}\sum_{\substack{I\subset [N],|I|=n\\F\subset[M],|F|=m}}\ind(j\notin F)\hat{f}_{I,F}(X),\\
        f^{-k}(X) = &\frac{1}{\binom{N}{n}\binom{M-1}{m}}\sum_{\substack{I\subset [N],|I|=n\\F\subset[M],|F|=m}}\ind(k\notin F)\hat{f}_{I,F}(X),\\
        f(X)=&\frac{1}{\binom{N}{n}\binom{M}{m}}\sum_{\substack{I\subset [N],|I|=n\\F\subset[M],|F|=m}}\hat{f}_{I,F}(X),
    \end{split}
\end{equation*}
and $f_{-i}^{*-(j,k)}(X),\,f_{-i}^{*-j}(X),\,f_{-i}^{*-k}(X),\,f_{-i}^{*}(X)$ are defined as earlier. Following the same arguments as those in Section A.7.2 of \cite{gan2022inference}, we can bound the differences between the leave-one-out predictions and the full model predictions by $\frac{Dn}{N}$. Therefore, we have $|\varepsilon^{(2)}|\leq \frac{4LDn}{\sigma^{\MP}_{j,k}\sqrt{N}}$, and by Assumption \ref{assump:mpsize_full}, $\lim_{N\rightarrow\infty}\varepsilon^{(2)}=0$.

\paragraph{Convergence of $\varepsilon^{(3)}$.} Recall the definition of $\tilde{h}_j(X_i,Y_i;\mathbf{X}_{-i},\mathbf{Y}_{-i})$, we can write
\begin{equation*}
\begin{split}
     \varepsilon^{(3)} = &h_{j,k}(X_i,Y_i;\mathbf{X}_{-i},\mathbf{Y}_{-i}) - \bbE[h_{j,k}(X_i,Y_i;\mathbf{X}_{-i},\mathbf{Y}_{-i}) \mid \mathbf{X}_{-i},\mathbf{Y}_{-i}] \\
     &- h_{j,k}(X_i,Y_i) + \bbE[h_{j,k}(X_i,Y_i)].
\end{split}
\end{equation*}
The only difference of our proof here from Section A.7.3 in \cite{gan2022inference} is that we are looking at the interaction score function $h_{j,k}$ instead of individual feature importance function $h_j$. Therefore, the main argument is to bound the stability notion in \cite{bayle:2020} associated with function $h_{j,k}$:
\begin{equation*}
    \gamma_{loss}(h_{j,k}) = \frac{1}{N-1} \sum_{l\neq i} \bbE \left[ \left( h_{j,k}'(X_i,Y_i;\mathbf{X}_{-i},\mathbf{Y}_{-i}) - h_{j,k}'(X_i,Y_i;\mathbf{X}_{-i}^{\backslash l},\mathbf{Y}_{-i}^{\backslash l}) \right)^2 \right],
\end{equation*}
where $(\mathbf{X}_{-i},\mathbf{Y}_{-i})$ denotes the training data excluding sample $i$, while $(\mathbf{X}_{-i}^{\backslash l},\mathbf{Y}_{-i}^{\backslash l})$ excludes sample $i$ and substitutes sample $l$ by a new sample $(X_{N+1}, Y_{N+1})$. The function $h_{j,k}'(X_i,Y_i;\mathbf{X}_{-i},\mathbf{Y}_{-i}) = h_{j,k}(X_i,Y_i;\mathbf{X}_{-i},\mathbf{Y}_{-i}) - \bbE[h_{j,k}(X_i,Y_i;\mathbf{X}_{-i},\mathbf{Y}_{-i}) \mid \mathbf{X}_{-i},\mathbf{Y}_{-i}]$. We note that
\begin{equation*}
\begin{split}
    \gamma_{loss}(h_{j,k}) = & \frac{1}{N-1} \sum_{l\neq i} \bbE \Bigg[ \mathrm{Var} \Big( 
    h_{j,k}(X_i,Y_i;\mathbf{X}_{-i},\mathbf{Y}_{-i}) \\
    &\hspace{3.5cm} - h_{j,k}(X_i,Y_i;\mathbf{X}_{-i}^{\backslash l},\mathbf{Y}_{-i}^{\backslash l}) \,\Big|\, 
    \mathbf{X}_{-i},\mathbf{Y}_{-i},X_{N+1},Y_{N+1} \Big) \Bigg] \\
    \leq & \frac{1}{N-1} \sum_{l\neq i} \bbE \left[ \left( h_{j,k}(X_i,Y_i;\mathbf{X}_{-i},\mathbf{Y}_{-i}) - h_{j,k}(X_i,Y_i;\mathbf{X}_{-i}^{\backslash l},\mathbf{Y}_{-i}^{\backslash l}) \right)^2 \right].
\end{split}
\end{equation*}
Recall our definitions of $f_{-i}^{-j}(X),\,f_{-i}^{-k}(X),\,f_{-i}^{-(j,k)}(X)$ when showing the convergence of $\varepsilon^{(1)}$. In addition, we define $f_{-i}^{-j}(X;l\leftarrow N+1),\,f_{-i}^{-k}(X;l\leftarrow N+1),\,f_{-i}^{-(j,k)}(X;l\leftarrow N+1)$ as the corresponding predictors if the training sample $(X_l,Y_l)$ were substituted by a new sample $(X_{N+1},Y_{N+1})$. Therefore, we can further show that
\begin{equation*}
\begin{split}
    \gamma_{loss}(h_{j,k})
    \leq&\frac{4L^2}{N-1}\sum_{l\neq i}\bbE\|f_{-i}^{-j}(X_i) - f_{-i}^{-j}(X_i;l\leftarrow N+1)\|_2^2\\
    &+\bbE\|f_{-i}^{-k}(X_i) - f_{-i}^{-k}(X_i;l\leftarrow N+1)\|_2^2\\
    &+\bbE\|f_{-i}^{-(j,k)}(X_i) - f_{-i}^{-(j,k)}(X_i;l\leftarrow N+1)\|_2^2\\
    &+\bbE\|f_{-i}(X_i) - f_{-i}(X_i;l\leftarrow N+1)\|_2^2,
\end{split}
\end{equation*}
where we have applied Assumption \ref{assump:err_lipschitz_full}.

For any subset $I\subset[N]$ that includes sample $l$, let $I^{\backslash l} = \{i\neq l:i\in I\}\cup \{N+1\}$. We then have
\begin{equation*}
\begin{split}
    &\|f_{-i}^{-(j,k)}(X_i) - f_{-i}^{-(j,k)}(X_i;l\leftarrow N+1)\|_2\\
    \leq &\frac{1}{\binom{N-1}{n}\binom{M-2}{m}}\sum_{\substack{I\subset[N],|I|=n\\F\subset[M],|F|=m}}\ind(i\notin I)\ind(j,k\notin F)\ind(l\in F)\|\hat{f}_{I,F}(X_i)-\hat{f}_{I^{\backslash l},F}(X_i)\|_2\\
    \leq&\frac{Dn}{N-1},
\end{split}
\end{equation*}
where the last line is due to Assumption \ref{assump:bnd_mu_full}.
Same bounds can also be shown for $\|f_{-i}^{-j}(X_i) - f_{-i}^{-j}(X_i;l\leftarrow N+1)\|_2$, $\|f_{-i}^{-k}(X_i) - f_{-i}^{-k}(X_i;l\leftarrow N+1)\|_2$, $\|f_{-i}(X_i) - f_{-i}(X_i;l\leftarrow N+1)\|_2$. Therefore, we have $\gamma_{loss}(h_{j,k})\leq \frac{16L^2D^2n^2}{(N-1)^2}$. Applying Assumption \ref{assump:mpsize_full}, the rest of the arguments in Section A.7.3 of \cite{gan2022inference} follow directly, and we have $\varepsilon^{(3)}\overset{p}{\rightarrow}0$.

\paragraph{Consistency of variance estimate.} The consistency of variance estimate $\hat{\sigma}_{j,k}^2$ to $(\sigma^{\MP}_{j,k})^2$ can also be shown following similar proofs of Theorem 2 in \cite{gan2022inference}. Similar to the proof of Theorem \ref{thm:coverage_Split_full}, the moment condition in Assumption \ref{assump:split_thirdmoment_full} implies the uniform integrability of $h_{j,k}(X)/\sigma^{\MP}_{j,k}$, similar to the condition for the individual feature importance score in \cite{gan2022inference}. The main arguments are bounding the stability of the function $h_{j,k}$, and the difference between $h_{j,k}(X,Y;\mathbf{X}^{-i},\mathbf{Y}^{-i})$ and $h_{j,k}(X,Y;\mathbf{X},\mathbf{Y})$, which have already been shown earlier. Therefore, we can establish $\hat{\sigma}_{j,k}/\sigma^{\MP}_{j,k}\overset{p}{\rightarrow}1$, which finishes the proof. 
\end{proof}

%%%%%%%%%%%%%%%%%%%%%%%%%%%%%%%%%%%%%%%%%%%%%%%%%%%%%%%%%%%%%%%%%%%%%%%%%%%%%
\section{A General Version of Proposition 1 and its Proofs}
Suppose that $\bbE(Y|X) = f^*(X)$ for some unknown mean function $f^*(\cdot)$. The conditional distribution of $Y$ given $X$ will be specified later for regression and classification settings. Consider functional ANOVA decomposition for $f^*(X)$:
$$
f^*(X) = g_0 + \sum_{j=1}^M g_j(X_j) + \sum_{1\leq j<k\leq M}g_{j,k}(X_j,X_k) + \sum_{1\leq j<k<l\leq M} g_{j,k}(X_j,X_k,X_l) +\cdots.
$$

Following the notational convention of functional ANOVA, for any $u\subseteq [M]$, we let $g_u(X_u)$ denotes the function term with indices in $u$. We can then write $f^*(X) = \sum_{u\subseteq[M]}g_u(X_u)$, where $g_u(X_u) = g_0$ if $u=\emptyset$.
\begin{assump}\label{assump:fANOVA}
The functional ANOVA satisfies the following: 
    \begin{itemize}
        \item Zero mean: $\bbE [g_u(X_u)] = 0$ when $u\neq\emptyset$.
        \item Zero correlation: $\bbE [g_u(X_u) g_v(X_v)] = 0$ if $u\neq v$.
    \end{itemize}
\end{assump}
Note that much of the work on functional ANOVA assumes orthogonality of the functions and Assumption 13 can be viewed as a probabilistic extension of such conditions \cite{hooker:2007}. In fact, when all features are independent, Assumption \ref{assump:fANOVA} is immediately implied by defining $g_u(X_u)$ recursively: $g_0 =\mathbb{E}[f^*(X)]$, $g_u(X_u)=\mathbb{E}[f^*(X) - \sum_{v\subset u}g_v(X)|X_u]$ (see Proposition \ref{prop:fANOVA_justification}).

Consider the population S-way interaction iLOCO score: $\mathrm{\iloco}_S^* = \sum_{T\subseteq S,T\neq\emptyset}(-1)^{|T|+1}\Delta_T^*$, where 
$\Delta_T^* = \bbE(\err(Y,f^{*-T}(X_{-T}))- \bbE(\err(Y,f^*(X))$, with $f^{*-T}(X_{-T})$ being the population model excluding feature(s) $T$: 
$$
f^{*-T}(X_{-T}) = \sum_{u\subseteq[M]\backslash T}g_u(X_u). 
$$
When $S = \{j,k\}$, $\mathrm{\iloco}_S^* = \Delta_j^* - \Delta_k^* - \Delta_{j,k}^* = \mathrm{\iloco}_{j,k}^*$. As we will see in Proposition \ref{prop:fANOVA_justification}, in the independent feature setting, $f^{*-T}(X_{-T}) = \bbE(Y|X_{-T})$ is the oracle predictive function for $Y$ given $X_{-T}$. 

The following proposition is a generalized version of Proposition 1 in the main paper.
\begin{prop}\label{prop:target_population}
    Consider the functional ANOVA model under Assumption \ref{assump:fANOVA}.
    \begin{itemize}
        \item[(a)] Consider a regression model where $Y = f^{*}(X) + \epsilon$, with $\epsilon$ being zero-mean random noise with finite second moment, independent from $X$. If $\err(Y,\hat{Y}) = (Y-\hat{Y})^2$ is the mean squared error function, then
            \begin{equation}\label{eq:iloco_pop_mse}
            \mathrm{\iloco}_S^* = \sum_{u\subseteq[M]: u\supseteq S}\bbE(g_u(X_u)^2).
        \end{equation}
        \item[(b)] Consider a classification model where $Y \sim \text{Bernoulli}(f^{*}(X))$. If $\err(Y,\hat{Y}) = |Y-\hat{Y}|$, then
            \begin{equation*}\label{eq:iloco_pop_abs}
            \mathrm{\iloco}_S^* = 2\sum_{u\subseteq[M]: u\supseteq S}\bbE(g_u(X_u)^2).
        \end{equation*}
    \end{itemize}
\end{prop}
Proposition \ref{prop:target_population} implies Proposition 1 in the main paper. It also justifies the higher-order S-way interaction iLOCO score defined in Section 2.4.

The following proposition provides an example where Assumption \ref{assump:fANOVA} easily holds.
\begin{prop}\label{prop:fANOVA_justification}
    Suppose that $\bbE(Y|X) = f^*(X)$ and features $X_1,\dots,X_M$ are all independent. If $g_0=\bbE(f^*(X))$, $g_u(X_u) = \bbE(f^*(X)|X_u) - \sum_{v\subset u}g_v(X_v)$ are defined recursively, then the functional ANOVA decomposition holds:
    $f^*(X) = \sum_{u\subseteq[M]}g_u(X)$, satisfying Assumption \ref{assump:fANOVA}. Furthermore, $\bbE(Y|X_u) = \sum_{v\subseteq u} g_v(X_v)$.
\end{prop}

\begin{proof}[Proof of Proposition \ref{prop:target_population}]
We prove the results for regression and classification separately.

    {\bf Proof of (a).} We start from the regression model and the mean squared error loss. By the definition of $\Delta_T^*$, we can write 
    \begin{equation*}
        \begin{split}
            \Delta_T^* = &\bbE[(Y - f^{*-T}(X))^2] - \bbE[(Y - f^*(X))^2]\\
            =&\bbE(\epsilon^2) + \bbE[(f^*(X)-f^{*-T}(X))^2] - \bbE(\epsilon^2)\\
            =&\bbE[(\sum_{u\subseteq[M]:u\cap T\neq \emptyset}g_u(X_u))^2]\\
            =&\sum_{u\subseteq[M]:u\cap T\neq \emptyset}\bbE[g_u^2(X_u)].
        \end{split}
    \end{equation*}
    where the second and last lines utilized the zero-mean and zero-correlation assumptions (Assumption \ref{assump:fANOVA}) for our functional ANOVA decomposition. 

    Now we recall the definition of $\mathrm{\iloco}_S^*$ and plug in $\Delta_T^*$ above into $\mathrm{\iloco}_S^*$:
    \begin{equation*}
        \begin{split}
            \mathrm{\iloco}_S^* = &\sum_{T\subseteq S,T\neq\emptyset}(-1)^{|T|+1}\Delta_T^*\\
            =&\sum_{T\subseteq S,T\neq\emptyset}(-1)^{|T|+1}\sum_{u\subseteq[M]:u\cap T\neq \emptyset}\bbE[g_u^2(X_u)]\\
            =&\sum_{u\subseteq[M]}\bbE[g_u^2(X_u)]\sum_{T\subseteq S,T\neq\emptyset}(-1)^{|T|+1} \ind(u\cap T\neq \emptyset).
        \end{split}
    \end{equation*}
    In the following, we will show that \begin{equation}\label{eq:S_way_decomp}
        \sum_{T\subseteq S,T\neq\emptyset}(-1)^{|T|+1} \ind(u\cap T\neq \emptyset) = \ind(u\supseteq S),
    \end{equation}
    which immediately implies \eqref{eq:iloco_pop_mse}.

    To prove \eqref{eq:S_way_decomp}, we first let $A_j$ denote the event that $u\owns j$. we can then write
    \begin{equation*}
        \begin{split}
            \sum_{T\subseteq S,T\neq\emptyset}(-1)^{|T|+1} \ind(u\cap T\neq \emptyset) = &\sum_{T\subseteq S,T\neq\emptyset}(-1)^{|T|+1} \ind(\cup_{j\in T} A_j),\\
            \ind(u\supseteq S) = &\ind(\cap_{j\in S}A_j).
        \end{split}
    \end{equation*}
    The following lemma suggests $\ind(\cap_{j\in S}A_j) = \sum_{T\subseteq S,T\neq\emptyset}(-1)^{|T|+1} \ind(\cup_{j\in T} A_j)$ and hence concludes our proof for Part (a). Lemma \ref{lem:inclusion_exclusion} is essentially the inclusion-exclusion principle, but applied to indicator functions.
    \begin{lemma}\label{lem:inclusion_exclusion}
      Consider arbitrary events $A_1,\dots,A_n$ with $n\geq 2$. It holds that
      \begin{equation}\label{eq:inclusion_exclusion}
          \ind(\cap_{j=1}^n A_j) = \sum_{T\subseteq [n],T\neq \emptyset}(-1)^{|T|+1}\ind(\cup_{j\in T}A_j).
      \end{equation}
    \end{lemma}

    {\bf Proof of (b).} Now we consider the classification model $Y \sim \text{Bernoulli}(f^{*}(X))$ and the absolute error $\err(Y,\hat{Y}) = |Y-\hat{Y}|$. We first note that for any predictive model $\hat{Y} = f(X)\in [0,1]$, we have
    \begin{equation}
        \begin{split}
            \bbE(\err(Y,f(X))) = &\bbE[(1-f(X))\bbP(Y=1) + f(X)\bbP(Y=0)]\\
            =&\bbE[(1-f(X))f^*(X) + f(X)(1-f^*(X))]\\
            =&\bbE[f^*(X) + f(X) - 2f(X)f^*(X)].
        \end{split}
    \end{equation}
    Therefore, we can write
    \begin{equation*}
        \begin{split}
            \Delta_T^* = \bbE[\err(Y,f^{*-T}(X_{-T})) - \err(Y,f^{*}(X))] = \bbE[(f^{*-T}(X_{-T}) - f^*(X))(1-2f^*(X))].
        \end{split}
    \end{equation*}
    Using the functional ANOVA decomposition, we can then write
    \begin{equation*}
        \begin{split}
            \Delta_T^* = &\bbE[(f^{*-T}(X_{-T}) - f^*(X))(1-2f^*(X))]\\
            =&\bbE[\sum_{u\subseteq [M]: u\cap T\neq \emptyset}g_u(X_u)(2f^*(X)-1)]\\
            =&2\bbE[\sum_{u\subseteq [M]: u\cap T\neq \emptyset}g_u(X_u)f^*(X)]\\
            =&2\bbE[\sum_{u\subseteq [M]: u\cap T\neq \emptyset}g_u^2(X_u)],
        \end{split}
    \end{equation*}
    where the second line is due to the zero-mean assumption, and the last line is due to the zero-correlation assumption for our functional ANOVA decomposition.

    Compared to the regression setting and mean squared error, $\Delta_T^*$ for classification and absolute error loss takes a very similar form, except with an extra factor 2. Therefore, using the same argument as those for proving Part (a), we can show that 
    \begin{equation*}
        \mathrm{\iloco}_S^* = 2\bbE[\sum_{u\subseteq [M]: u\supseteq S}g_u^2(X_u)].
    \end{equation*}
    The proof is now complete.
\end{proof}
\begin{proof}[Proof of Lemma \ref{lem:inclusion_exclusion}]
     We prove Lemma \ref{lem:inclusion_exclusion} by induction. When $n=2$, we can immediately verify that $\ind(\cap_{j=1}^n A_j) = \ind(A_1\cap A_2) = \ind(A_1) + \ind(A_2) - \ind(A_1\cup A_2)$. Now we assume that \eqref{eq:inclusion_exclusion} holds for $n=k$, and then show that this implies \eqref{eq:inclusion_exclusion} holds for $n=k+1$. In particular, we can write
     \begin{equation*}
         \begin{split}
             \ind(\cap_{j=1}^{k+1} A_j) = \ind((\cap_{j=1}^k A_j)\cap A_{k+1}) = &\ind(\cap_{j=1}^k A_j)+\ind(A_{k+1}) - \ind((\cap_{j=1}^k A_j)\cup A_{j+1}) \\
             =&\ind(\cap_{j=1}^k A_j)+\ind(A_{k+1}) - \ind(\cap_{j=1}^k (A_j\cup A_{j+1})).
         \end{split}
     \end{equation*}
     Let $B_j = A_j\cup A_{j+1}$, and apply \eqref{eq:inclusion_exclusion} on $\ind(\cap_{j=1}^k A_j)$ and $\ind(\cap_{j=1}^k B_j)$. We then have 
     \begin{equation*}
         \begin{split}
             \ind(\cap_{j=1}^{k+1} A_j) =&\sum_{T\subseteq [k], T\neq\emptyset}(-1)^{|T|+1}\ind(\cup_{j\in T}A_j)+\ind(A_{k+1}) - \sum_{T\subseteq [k]}(-1)^{|T|+1}\ind(\cup_{j\in T}B_j)\\
             =&\sum_{T\subseteq [k],T\neq\emptyset}(-1)^{|T|+1}\ind(\cup_{j\in T}A_j)+\ind(A_{k+1}) + \sum_{T\subseteq [k],T\neq\emptyset}(-1)^{|T|+2}\ind(\cup_{j\in T}A_j\cup A_{k+1})\\
             =&\sum_{T\subseteq [k],T\neq\emptyset}(-1)^{|T|+1}\ind(\cup_{j\in T}A_j)+\ind(A_{k+1}) + \sum_{T\subseteq [k],T\neq\emptyset}(-1)^{|T|+2}\ind(\cup_{j\in T\cup\{k+1\}}A_j)\\
             =&\sum_{T\subseteq [k]}(-1)^{|T|+1}\ind(\cup_{j\in T}A_j) + \sum_{T\subseteq [k+1]:T\owns j}(-1)^{|T|+1}\ind(\cup_{j\in T}A_j)\\
             =&\sum_{T\subseteq [k+1]}(-1)^{|T|+1}\ind(\cup_{j\in T}A_j). 
         \end{split}
     \end{equation*}
     Therefore, \eqref{eq:inclusion_exclusion} with $n=k$ implies \eqref{eq:inclusion_exclusion} with $n=k+1$. Since we already showed \eqref{eq:inclusion_exclusion} holds for $n=2$, by induction, our proof is complete.
\end{proof}
\begin{proof}[Proof of Proposition \ref{prop:fANOVA_justification}]
    By the recursive definition of $g_u(X_u)$, we know that $g_{[M]}(X) = f^*(X) - \sum_{u\subset[M]}g_u(X_u)$, and hence the functional ANOVA decomposition
    $f^*(X) = \sum_{u\subseteq[M]}g_u(X_u)$ holds.

    In the remaining proof, we will assume the following claim to be true. We will prove this claim in the end.
    \begin{claim}\label{claim:fANOVA_cond_mean_zero}
        For any non-empty set $u\subset[M]$ and its proper subset $v\subset u$, $\bbE[g_u(X_u)|X_v]=0$.
    \end{claim}
    By letting $v=\emptyset$, Claim \ref{claim:fANOVA_cond_mean_zero} immediately implies the zero-mean assumption in Assumption \ref{assump:fANOVA}. Furthermore, for any index set $u\neq v\subseteq[M]$, 
    \begin{equation*}
    \begin{split}
        \bbE[g_u(X_u)g_v(X_v)]= & \bbE[\bbE(g_u(X_u)g_v(X_v)|X_{u\cap v})]\\
        =&\bbE[\bbE(g_u(X_u)|X_{u\cap v})\bbE(g_v(X_v)|X_{u\cap v})]\\
        =&0,
    \end{split}
    \end{equation*}
    where the second line is due to the independence among all features; the last line utilizes Claim \ref{claim:fANOVA_cond_mean_zero} and the fact that $u\cap v \subset u$, $u\cap v\subset v$. 

    Furthermore, for any feature set $u$, $\bbE(Y|X_u) = \bbE[\bbE(Y|X)|X_u]=\bbE[\sum_{v\subseteq[M]}g_v(X_v)|X_u] = \sum_{v\subseteq[M]}\bbE[g_v(X_v)|X_u]$. Due to the independence among features, when $v\neq u$, $\bbE[g_v(X_v)|X_u] = \bbE[g_v(X_v)|X_{u\cap v}]$. Claim \ref{claim:fANOVA_cond_mean_zero} implies that $\bbE[g_v(X_v)|X_{u\cap v}]=0$ whenever $u\cap v \neq v$, or equivalently, whenever $v\not\subseteq u$. Therefore, $\bbE(Y|X_u) = \bbE[\sum_{v\subseteq u}g_v(X_v)|X_u] = \sum_{v\subseteq u}g_v(X_v)$. Therefore, we have proved all statements in Proposition \ref{prop:fANOVA_justification}.

    Now we prove Claim \ref{claim:fANOVA_cond_mean_zero} via induction. We first note that when $u$ is of size $1$, Claim \ref{claim:fANOVA_cond_mean_zero} is implied by the zero-mean property of $g_j(X_j),j\in [M]$, which has been shown in the beginning of this proof. Now suppose that Claim \ref{claim:fANOVA_cond_mean_zero} holds for $u$ of size $k$. Then if the size of $u$ is $k+1$, for any of its proper subset $v\subset u$, we can write
    \begin{equation*}
        \begin{split}
            \bbE[g_u(X_u)|X_v] = \bbE[f^*(X)|X_v] - \sum_{w\subset u}\bbE[g_w(X_w)|X_{v\cap w}].
        \end{split}
    \end{equation*}
    Since $w\subset u$ is of size at most $k$, we can apply Claim \ref{claim:fANOVA_cond_mean_zero} on $g_w(X_w)$. When $w\not\subseteq v$, $v\cap w\subset u$, $\bbE[g_w(X_w)|X_{v\cap w}]=0$. Hence we can further write
    \begin{equation*}
        \begin{split}
            \bbE[g_u(X_u)|X_v] = &\bbE[f^*(X)|X_v] - \sum_{w\subseteq v}\bbE[g_w(X_w)|X_{v\cap w}]\\
            =&\bbE[f^*(X)|X_v] - \sum_{w\subset v}g_w(X_w)-g_v(X_v)\\
            =&0,
        \end{split}
    \end{equation*}
    where the last line is due to the definition of $g_v(X_v)$. Therefore, Claim \ref{claim:fANOVA_cond_mean_zero} holds for any non-empty set $u\subset[M]$. The proof is now complete.
\end{proof}

\setcounter{figure}{0}
\renewcommand{\thefigure}{A\arabic{figure}}
\setcounter{table}{0}
\renewcommand{\thetable}{A\arabic{table}}
\newpage
\section{Additional Empirical Studies}
We first include results on the same experiments as Figure~\ref{fig:metric_validation} in the main paper, but for a Random Forest classifier as shown in Figure~\ref{fig:metric_validation_rf}. Furthermore, we extend the experiments from Figure~\ref{fig:comp_rankings} in the main paper by including additional results on nonlinear regression, linear classification, and linear regression simulations, shown in Figures~\ref{fig:comp_rankings_nonlineref}, \ref{fig:comp_rankings_linclass}, and \ref{fig:comp_rankings_linreg}, respectively. We also evaluate a correlated feature setting where we use an autoregressive design with $\boldsymbol{\Sigma}^{-1}_{j,j+1} = 0.8$ instead of $\boldsymbol{\Sigma} = \mathbf{I}$ in the uncorrelated simulations. Across all additional simulations, \iloco-MP and \iloco-Split consistently outperform baseline methods. Lastly, we include results validating our coverage guarantees using a setting with SNR = 2. Both \iloco-MP and \iloco-Split achieve empirical coverage near 0.9, supporting the validity of our inference procedure.

Table~\ref{tab:car} compares the top 10 feature interactions identified by FaithSHAP and \iloco-MP on the Car Evaluation dataset. Both methods recover the same top 10 interactions, with minor differences in ranking order.

\begin{figure}
    \centering
    \includegraphics[width =0.9\textwidth]{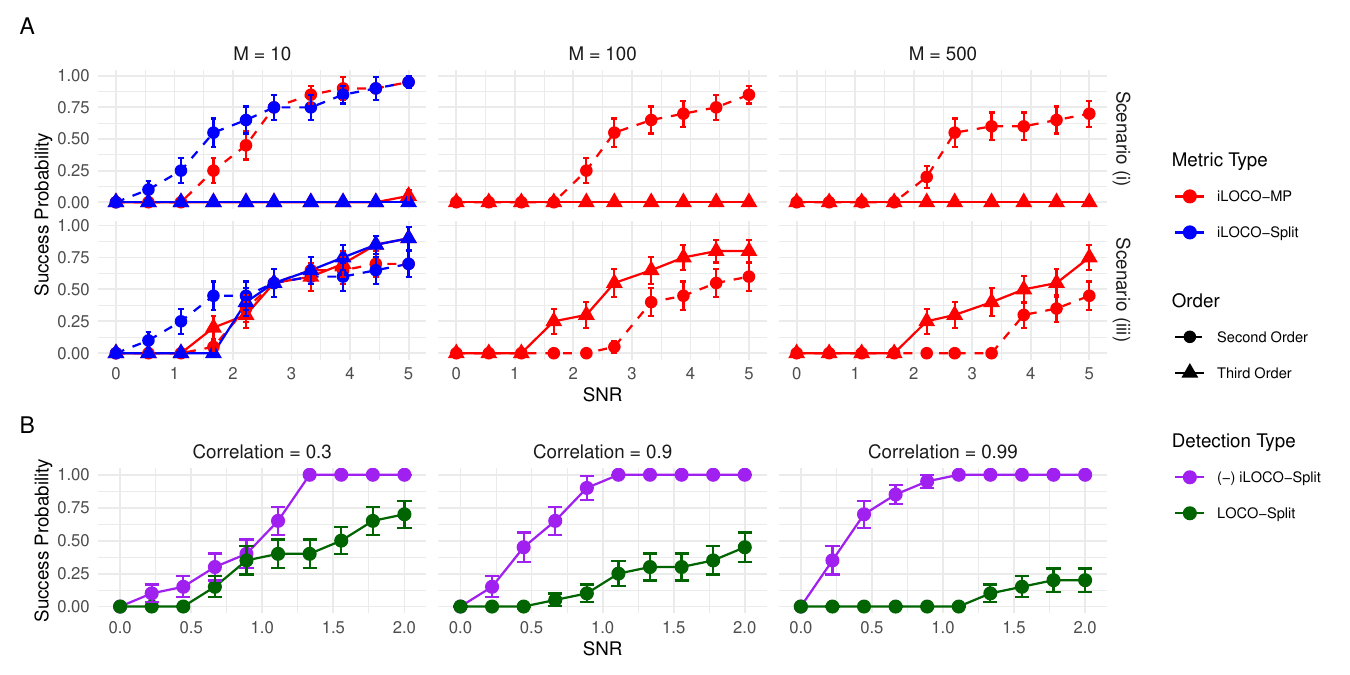}
    \caption{Part A shows the success probability of identifying an interaction pair in nonlinear classification scenarios (i) and (iii) using a RF classifier. Part B presents the success probability of detecting an important, correlated feature pair across varying correlation strengths.}
    \label{fig:metric_validation_rf}
\end{figure}

\begin{figure}[ht!]
    \centering
    \includegraphics[width =0.85\textwidth]{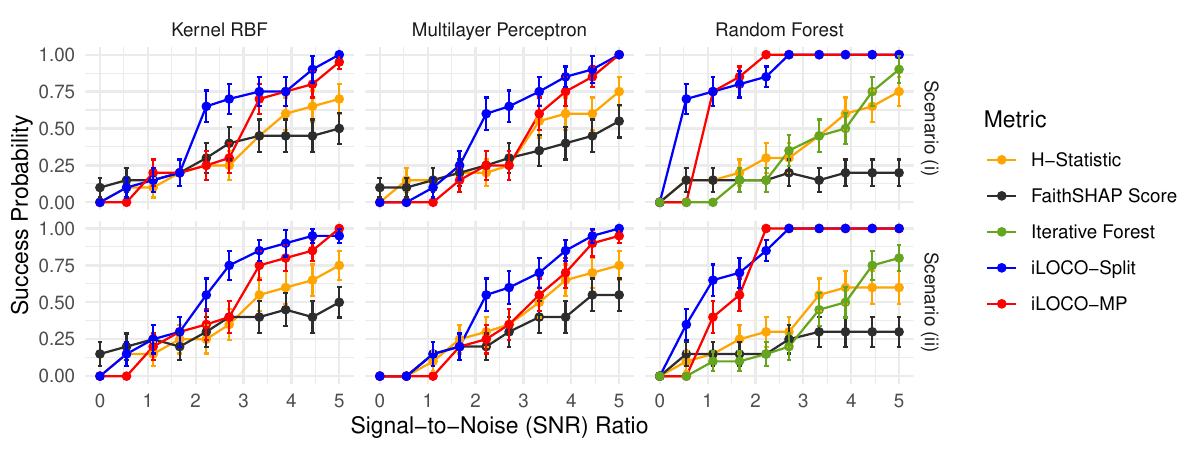}
    \caption{Ranking of feature pair (0,1) across SNR for KRBF, RF, and RF on nonlinear regression simulations 1 and 2. }
    \label{fig:comp_rankings_nonlineref}
\end{figure}
\begin{figure}[ht!]
    \centering
    \includegraphics[width =0.85\textwidth]{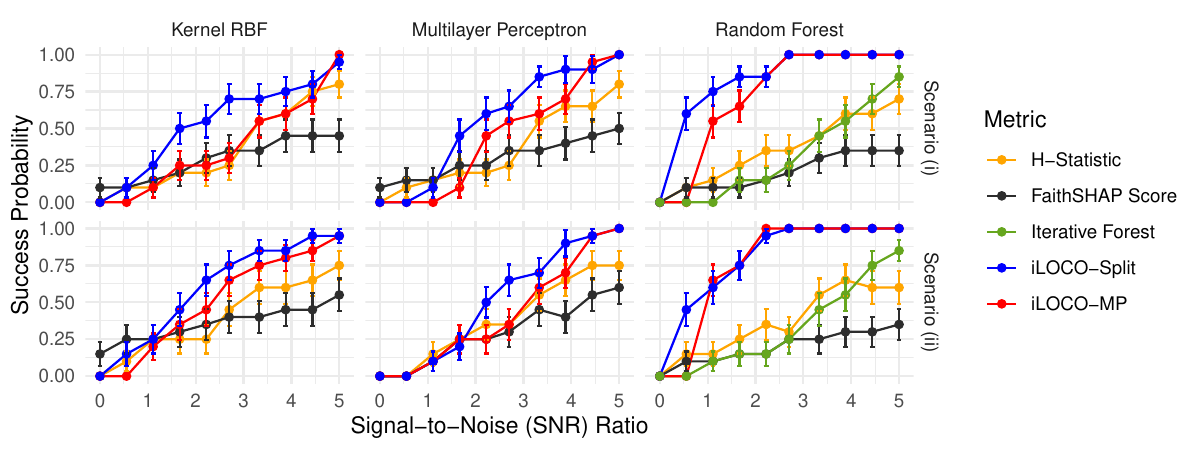}
    \caption{Ranking of feature pair (0,1) across SNR for KRBF, RF, and RF on linear classification simulations 1 and 2. }
    \label{fig:comp_rankings_linclass}
\end{figure}
\begin{figure}[htp!]
    \centering
    \includegraphics[width =0.85\textwidth]{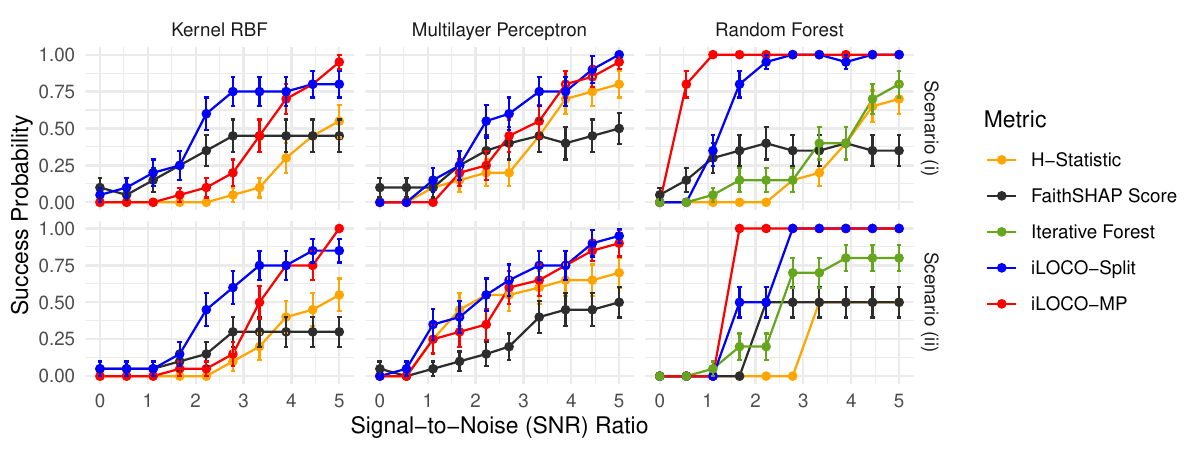}
    \caption{Ranking of feature pair (0,1) across SNR for KRBF, RF, and RF on linear regression simulations 1 and 2. }
    \label{fig:comp_rankings_linreg}
\end{figure}

\begin{figure}[htp!]
    \centering
    \includegraphics[width =0.85\textwidth]{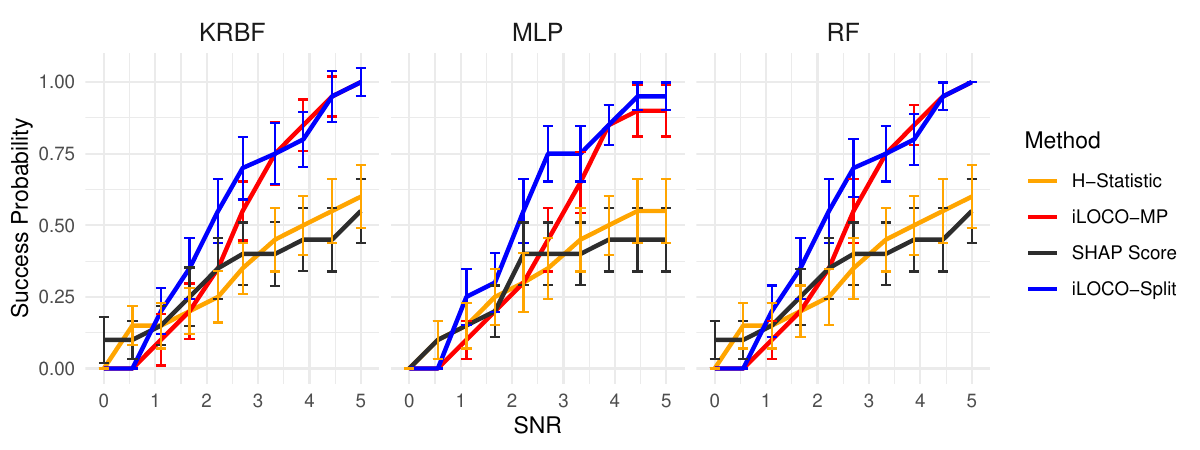}
    \caption{Ranking of feature pair (0,1) for KRBF, MLP, and RF regressors on the correlated simulation.}
    \label{fig:corr_rankings}
\end{figure}

\begin{figure}[ht!]
    \centering
    \includegraphics[width =0.75\textwidth]{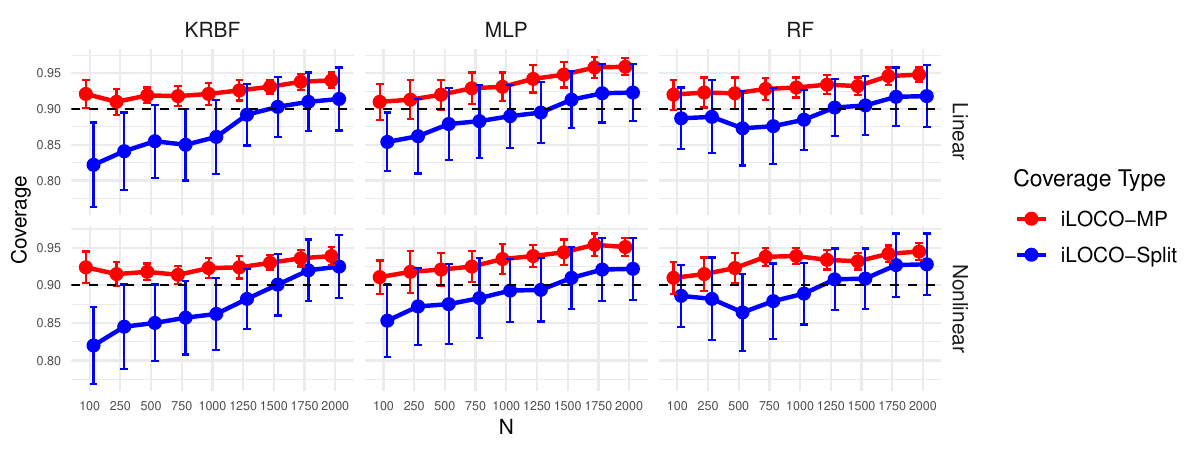}
    \caption{Coverage for the inference target of $snr=2$ features of 90\% confidence intervals in synthetic regression data using KRBF, MLP, and RF as the base estimators. \iloco-MP and \iloco-Split have valid coverage near 0.9.}
    \label{fig:coverage_2}
\end{figure}

\begin{table}[htp!]
\label{tab:car}
\centering
\caption{Top 10 Feature Interactions by FaithSHAP and iLOCO-MP on the Car Evaluation Dataset. }
\begin{tabular}{ll}
\toprule
\textbf{FaithSHAP} & \textbf{iLOCO-MP} \\
\midrule
Med Price, Low Maint         & 4p, 5p+ \\
4d, 5d+                      & Sm Trunk, Med Safety \\
4p, 5p+                      & Med Price, Low Maint \\
Sm Trunk, Med Safety         & 4d, 5d+ \\
5d+, Med Trunk               & Hi Price, Hi Maint \\
3d, 4d                       & 5d+, Med Trunk \\
Hi Price, Hi Maint           & Med Trunk, Sm Trunk \\
4p, Low Safety               & 4p, Low Safety \\
Med Trunk, Sm Trunk               & 3d, 4d \\
Hi price, 3d               & Hi price, 3d \\
\bottomrule
\end{tabular}
\end{table}

\end{refsection}
\end{document}